\documentclass{article}

\usepackage[main, final]{neurips_2026}
\usepackage{microtype}
\usepackage{hyperref}
\usepackage{url}
\usepackage{booktabs}
\usepackage{microtype}
\usepackage{graphicx}
\usepackage{subcaption}
\usepackage{booktabs} 
\usepackage{booktabs}
\usepackage{multirow}
\usepackage{amsmath}
\usepackage{algorithm}
\usepackage{algorithmic}
\usepackage{tabularx}
\usepackage{booktabs}
\usepackage{xcolor}
\usepackage{enumitem}
\usepackage{mathtools}
\usepackage{wrapfig}
\usepackage{amsthm}
\newtheorem{theorem}{Theorem}[section]

\newtheorem{remark}[theorem]{Remark}
\newtheorem{corollary}[theorem]{Corollary}

\usepackage[utf8]{inputenc} 
\usepackage[T1]{fontenc}    
\usepackage{hyperref}       
\usepackage{url}            
\usepackage{booktabs}       
\usepackage{amsfonts}       
\usepackage{nicefrac}       
\usepackage{microtype}      
\usepackage{xcolor}         

\title{Bilevel Optimization of Synthetic Trajectories for Multi-Turn LLM Fine-Tuning}

\author{%
  Shresth Verma \\
  Harvard University \\
  Cambridge, MA \\
  \texttt{sverma@g.harvard.edu}\\
  \AND
  Mauricio Tec \\
  Harvard University \\
  Cambridge, MA \\
  \texttt{mauriciogtech@g.harvard.edu} \\
  \And
  Cheol Woo Kim \\
  Harvard University \\
  Cambridge, MA \\
  \texttt{cwkim@g.harvard.edu} \\
  \And
  Kai Wang \\
  Georgia Institute of Technology \\
  Atlanta, GA \\
  \texttt{kwang692@gatech.edu}
  \And
  Milind Tambe \\
  Harvard University \\
  Cambridge, MA \\
  \texttt{tambe@g.harvard.edu}
}
\usepackage{booktabs}
\usepackage{multirow}
\usepackage{graphicx}
\usepackage{array}
\usepackage{xcolor}
\usepackage{colortbl}
\usepackage{geometry}
\usepackage{caption}
\usepackage{mdframed}
\definecolor{tqbg}{RGB}{232,242,252}
\definecolor{carbg}{RGB}{226,246,240}
\definecolor{webbg}{RGB}{251,240,222}
\definecolor{headergray}{RGB}{245,245,245}
 
\newcolumntype{R}[1]{%
  >{\centering\arraybackslash}p{#1}%
}
\begin{document}

\maketitle

\begin{abstract}
While LLMs excel at single-turn generation, they struggle with long-horizon, multi-turn interactions.
Offline reinforcement learning (RL) offers a scalable approach, yet its performance hinges on the availability and quality of multi-turn trajectory data. A common remedy is to augment training with synthetic trajectories generated by LLMs or simulators, but synthetic data is highly heterogeneous in quality, and naively treating all trajectories as equally informative can degrade performance.
We propose BOOST, a bilevel optimization framework where the inner
level trains the LLM on reweighted data and the outer level trains a lightweight
reweighting head on held-out real validation tasks, assigning continuous
trajectory-level weights without requiring an external judge.
To ground this approach, we
derive a PAC-Bayesian bound revealing a three-way trade-off: synthetic data increases
diversity but risks task-shift, while concentrating weight on high-quality trajectories
improves empirical performance at the cost of effective sample size.
Empirically, our method consistently outperforms multiple baselines. Analysis reveals it upweights synthetic trajectories that align with the real data distribution and exhibit higher qualitative merit.

\end{abstract}

\section{Introduction}

Large Language Models (LLMs) excel at generating fluent, human-like text, yet they often struggle with long-horizon reasoning and interaction \citep{liu2024lost,li2025beyond,han2025can,laban2026llms}.
In multi-turn settings such as negotiation, persuasion, or technical troubleshooting, effective behavior requires planning across multiple steps, where intermediate actions may not yield immediate rewards but are essential for long-term success.
For instance, a capable agent might ask a clarifying question rather than providing a premature answer, prioritizing future outcomes over short-term quality \citep{aliannejadi2019asking,kuhn2022clam,abdulhai2025lmrl}.

To address this limitation, multi-turn interaction can be framed as a reinforcement learning (RL) problem. While online RL optimizes for long-term objectives, model scale and interaction costs make it prohibitively expensive for LLMs. Consequently, offline RL has emerged as a scalable alternative, learning policies from pre-collected trajectories \citep{snell2022offline,chebotar2023q,hong2025planning}. Despite its scalability advantages, offline RL introduces a new bottleneck: the availability and quality of multi-turn trajectory data. High-performing offline RL methods require datasets with sufficient coverage of the state–action space, which is especially challenging in multi-turn domains.  To mitigate this issue, recent work has explored augmenting training data with synthetic trajectories generated by LLMs or simulators \citep{singh2023beyond,goldie2025synthetic}. While this improves coverage, it raises a critical challenge: synthetic data is highly heterogeneous in quality. A naive approach is to treat all trajectories as equally informative, but this can lead to suboptimal or even degraded performance, as low-quality or misleading trajectories may dominate the learning signal \citep{ alemohammad2023self,shumailov2023curse,gerstgrasser2024model}. Prior work has attempted to address this via binary filtering using a stronger LLM judge, retaining only trajectories that pass a quality threshold, or using expert knowledge base for generating synthetic data \citep{zelikman2022star,gunasekar2023textbooks,goldie2025synthetic}. While effective, such approaches discard potentially useful signal and rely on access to a higher-capacity scorer.


We propose a principled bilevel optimization framework for incorporating synthetic data into multi-turn RL for LLM fine-tuning. 
Alongside fine-tuning the LLM, we learn a lightweight reweighting head that assigns a scalar weight to each trajectory in a mixed dataset. These normalized weights modulate each trajectory's contribution in standard offline RL objectives, ensuring compatibility with various algorithms. Unlike heuristic filtering, our method performs continuous reweighting based on validation performance. 
We formulate the learning of reweighting model as a bilevel optimization problem: the inner level trains the policy on reweighted data, while the outer level optimizes the weights to improve performance on held-out real validation tasks. 
Unlike prior work that focuses on coarse-grained data source reweighting \citep{pan2025scalebio, xie2023doremi}, our method operates at the trajectory level, enabling fine-grained control over data quality. This is particularly important in settings with synthetic data, where variability in quality can be substantial even within a single data source. Empirically, we demonstrate that our approach improves multi-turn RL performance by effectively leveraging synthetic data while mitigating the impact of low-quality trajectories. Analysis shows our model upweights trajectories that align more closely with the real data distribution.

We summarize our contribution as follows:

\begin{enumerate}[leftmargin=*]
    \item We propose a bi-level optimization framework for trajectory-level reweighting of synthetic data in multi-turn RL fine-tuning of LLMs.
    We instantiate this framework with a reweighting head trained jointly with the base LLM and the Q functions via a fully first-order optimization scheme.
    The trajectory-level
reweighting head scales to large datasets without prohibitive computational overhead.
We further introduce some engineering  innovations in bilevel updates which improve training stability.

    
    \item To ground our approach, we derive a PAC-Bayesian bound on performance under trajectory reweighting. This reveals a three-way tradeoff: synthetic data increases diversity but risks task-shift, while focusing on high-quality trajectories improves empirical performance but reduces effective sample size, hiking the complexity penalty.
    

\item We empirically validate our approach on multi-turn RL benchmarks, showing consistent gains over baselines such as uniform data usage and reweighting applied only to real data. While we anticipated reweighting to always bias toward offline data, analysis reveals a surprise: BOOST selectively upweights synthetic trajectories mirroring the true distribution while suppressing low-quality offline samples. This manages the tradeoff between diversity and task-shift without sacrificing effective sample size. Furthermore, this process sometimes also recovers latent signals from offline points that were undervalued prior to synthetic augmentation, suggesting a synergistic rather than competitive relationship between data sources.



\end{enumerate}

\paragraph{Related Work}

Related work spans three areas: synthetic data augmentation and its degradation risks \citep{villalobos2022will, alemohammad2023self, shumailov2023curse}; bi-level optimization for SFT data reweighting \citep{pan2025scalebio, kwon2023fully, chen2025near}; and offline/multi-turn RL for long-horizon LLM planning \citep{snell2022offline, chebotar2023q, hong2025planning}. BOOST uniquely intersects these by extending bi-level reweighting to multi-turn RL with synthetic augmentation. See the Appendix for details.

\section{Preliminaries: Multi-Turn LLMs Fine-Tuning and Data Selection}
\label{sec:prelims}

\subsection{Task-conditioned Markov Decision Processes} \label{sec:prelims-mdp}

We model multi-turn LLM dialogue as a \emph{task-conditioned} Markov Decision
Process $\mathcal{M}_z = (\mathcal{S}, \mathcal{A}, P_z, R_z, \rho_z, \gamma)$,
where each task $z \sim \mathcal{P}_{\mathrm{task}}$ is drawn from a task
distribution and indexes a task-specific transition kernel $P_z$, reward
$R_z$, and initial-state distribution $\rho_z$
\citep{hallak2015contextual,finn2017model}. The state $s_t$ is the full
interaction history up to time $t$; an action $a_t$ is a complete utterance
sampled from the policy $\pi(\cdot \mid s_t)$; and the next state concatenates
the current state, action, and environment response:
$s_{t+1} = [s_t; a_t; o_t]$. In the \emph{infinite-horizon} formulation,
the goal is to find a policy $\pi_\theta$ maximizing the expected
task-conditioned discounted return
$J_\infty(\pi) = \mathbb{E}_{z \sim \mathcal{P}_{\mathrm{task}},\,\tau \sim p_\pi}
\!\left[\sum_{t=0}^{\infty} \gamma^t R_z(s_t, a_t)\right]$
with discount factor $\gamma \in (0, 1)$.  In the \emph{finite-horizon} formulation, episodes terminate after at most
$H_{\max}$ steps.
Many RL algorithms are traditionally framed for the infinite horizon case, while being often applied in practice to both frameworks.

The common
setup we consider for the theory and experiments in this paper is the
task-conditioned finite-horizon setup with horizon $H_{\max}$, reward
$R_z(s_t, a_t) = -1$ at every step prior to task completion,
$R_z(s_t, a_t) = 0$ once the task is completed (an absorbing terminal
state), and $\gamma = 1$; defining the expected hitting time of policy
$\pi$ on task $z$ as
$T_z^\pi = \mathbb{E}\!\left[\inf\{t \in \mathbb{N} : \text{task $z$ completed at step } t\}\right]$,
with $0 \le T_z^\pi \le H_{\max}$, the cumulative return of any trajectory
under this specification equals $-T_z^\pi$, so maximizing the expected
return $\mathbb{E}_{z\sim\mathcal{P}_{\mathrm{task}}}[J_H(\pi)]$ is
equivalent to minimizing the expected hitting time
$\mathbb{E}_{z\sim\mathcal{P}_{\mathrm{task}}}[T_z^\pi]$.
For notational
simplicity, we omit the explicit task index $z$ for the remainder of this
section when discussing the offline RL objectives and value functions.

\subsection{Offline RL Fine-Tuning for LLMs} \label{sec:prelims-algos}

In the offline setting, we do not interact with the environment but instead learn from a static dataset $D = \{(s_t, a_t, s_{t+1}, r_t)\}$ collected by a behavior policy $\pi_B$. To recover a policy that maximizes long-horizon return without online interaction, we estimate Q-values and V-values from $D$, which allow the policy to be extracted greedily at inference time. We consider two popular approaches:

\textbf{Monte Carlo Return (MC)} \citep{abdulhai2025lmrl} regresses a Q-head directly onto discounted rewards-to-go $G_t = \sum_{k \geq t} \gamma^{k-t} r_k$:
\[
\mathcal{L}_{\text{MC}} = \mathbb{E}_{(s_t, a_t, G_t) \sim D}\left[\bigl(Q_\theta(s_t, a_t) - G_t\bigr)^2\right].
\]

\textbf{Implicit Language Q-Learning (ILQL)} \citep{snell2022offline} jointly learns $Q_\theta(s,a)$ and $V_\theta(s)$ by minimizing Bellman error alongside an expectile loss $L_2^{\tau}$ that enforces $V$--$Q$ consistency:
\[
\mathcal{L}_{\text{ILQL}} = \mathbb{E}_{(s,a,r,s') \sim D}\left[\bigl(r + \gamma V_\theta(s') - Q_\theta(s,a)\bigr)^2\right] + \mathbb{E}_{(s,a) \sim D}\left[L_2^{\tau}\bigl(Q_\theta(s,a) - V_\theta(s)\bigr)\right],
\]

After learning the parameter $\theta$, the final policy is extracted differently depending on the RL objective: under MC, actions are upweighted according to their Q-values, $\pi_\theta(a \mid s) \propto \exp\!\bigl(\beta\, Q_\theta(s,a)\bigr)$, while under ILQL, the policy is extracted using the advantage $A_\theta = Q_\theta(s, a) - V_\theta(s)$, $\pi_\theta(a \mid s) \propto \exp\!\bigl(\beta\, A_\theta(s,a)\bigr)$, where $\beta > 0$ is a temperature parameter in both cases.

\subsection{Data Selection via Bilevel Optimization}
Fine-tuning LLMs is often challenging due to the data quality issue.
Poor data quality can significantly degrade the fine-tuning performance.
Prior work~\citep{pan2025scalebio} proposes using $\boldsymbol{w} \in \Delta^{m}$ to weight $m$ different training sources in $\mathcal{D}_{\text{train}}$ to optimize the validation loss of a validation set $\mathcal{D}_{\text{val}}$:
\begin{equation}~\label{eqn:bilevel-fine-tuning}
\begin{aligned}
    & \min_{\boldsymbol{w}} ~ \mathcal{L}_{\text{RL}}\bigl(\theta^*(\boldsymbol{w}); \mathcal{D}_{\text{val}}\bigr), \quad 
    & \text{s.t.} ~ \theta^*(\boldsymbol{w}) = \arg\min_{\theta} \mathcal{L}_{\text{train}}(\theta, \boldsymbol{w}; \mathcal{D}_\text{train}) \coloneqq \sum\nolimits_{i} w_i \mathcal{L}_\text{RL}(\theta, \mathcal{D}_{\text{train}}^{(i)}),
\end{aligned}
\end{equation}
where the loss function $\mathcal{L}_{\text{RL}}$ can be chosen to be either MC or ILQL fine-tuning loss. Motivated by the prior work in bilevel optimization~\citep{kwon2023fully,chen2025near}, Equation~\ref{eqn:bilevel-fine-tuning} can be reformulated as a minimax problem and trained using iterative gradient descent.
\begin{equation}
    \min_{\theta, \boldsymbol{w}} \max_{\psi} ~
    \mathcal{L}_{\text{RL}}\bigl(\theta^*(\boldsymbol{w}); \mathcal{D}_{\text{val}}\bigr)
    +
    \alpha \bigl(
        \mathcal{L}_{\text{train}}(\theta, \boldsymbol{w}; \mathcal{D}_\text{train})
        -
        \mathcal{L}_{\text{train}}(\psi, \boldsymbol{w}; \mathcal{D}_\text{train})
    \bigr),
    \label{eq:boost-minimax}
\end{equation}
where $\alpha > 0$ is a penalty parameter that enforces the inner-level optimality condition $L_{\text{train}}(\theta, \boldsymbol{w}) \approx L_{\text{train}}(\psi, \boldsymbol{w})$.
In practice, researchers have so far limited weight optimization to a few data sources to avoid the instability inherent in LLM fine-tuning. We address this limitation by showing how to transition from these broad sources to scaling directly to granular data samples.

\section{BOOST: Bilevel Optimization Of Synthetic Trajectories}

\begin{figure}
    \centering
    \includegraphics[width=\linewidth]{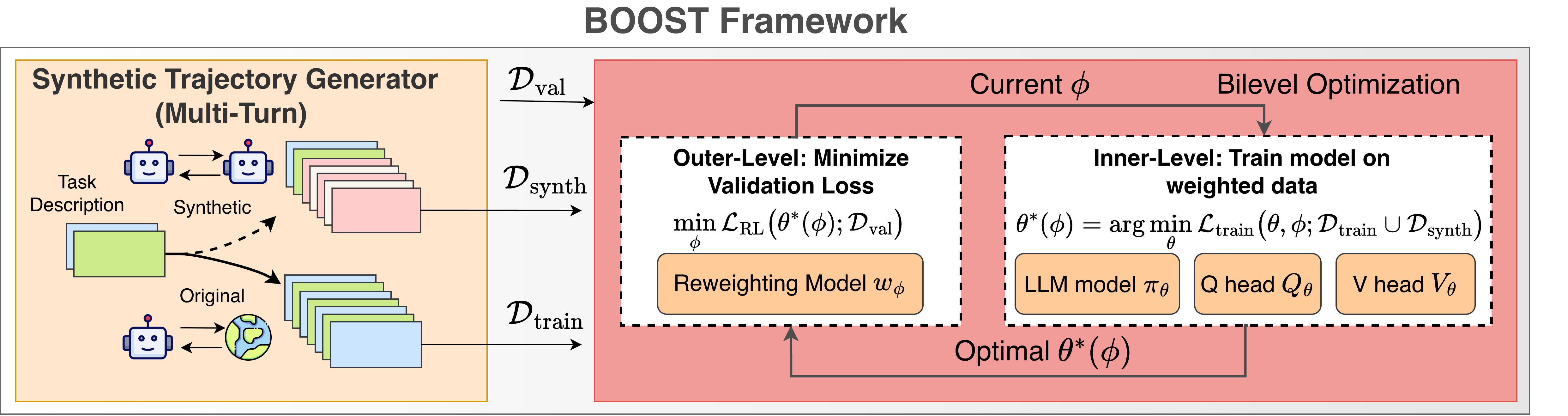}
    \caption{BOOST: The inner level trains model parameters ($\theta$) on a weighted mixture of real and synthetic data, while the outer level updates $\phi$ to minimize loss on a held-out validation set.}
    
    \label{fig:BOOST_framework}
\end{figure}

To overcome the scarcity of high-quality multi-turn dialogue data, we propose \textbf{BOOST} (\textbf{B}ilevel \textbf{O}ptimization of \textbf{S}ynthetic \textbf{T}rajectories) (Figure \ref{fig:BOOST_framework}). The key idea is to generate synthetic data from the multi-turn conversations in the training set, and learn a parameter-efficient \textit{reweight} model to identify useful synthetic trajectories and give them different training weights to improve downstream RL fine-tuning performance (e.g., under MC, or ILQL losses) on real validation tasks.

\subsection{The Challenge of Heterogeneous Synthetic Data}

In modern LLM fine-tuning, to ensure sufficient coverage of the vast state-action space, the dataset $\mathcal{D}$ is often augmented such that $\mathcal{D}_{mixed} = \mathcal{D}_{\text{real}} \cup \mathcal{D}_{\text{synthetic}}$. Because synthetic trajectories are generated by simulators or other LLMs, they exhibit extreme heterogeneity in quality. By optimizing the standard offline RL loss over $\mathcal{D}$ uniformly, the policy $\pi_\theta$ is forced to fit low-quality, biased, or ungrounded synthetic trajectories just as strongly as high-quality real data, ultimately degrading the expected return $J(\pi_\theta)$. 

\subsection{Bilevel Optimization with a Data Reweighting Head}

\begin{algorithm}[ht]
\caption{BOOST: Data-Reweighted Bilevel Optimization}
\label{alg:BOOST_optimization}
\begin{algorithmic}[1]
\REQUIRE Training dataset $\mathcal{D}_{\text{train}}$, validation dataset $\mathcal{D}_{\text{val}}$, synthetic data $\mathcal{D}_{\text{synth}}$
\REQUIRE Outer iterations $N$, inner steps $K_\psi, K_\theta, K_\phi$, Learning rates $\eta_\psi, \eta_\theta, \eta_\phi$; penalty $\alpha$
\STATE Initialize primary policy parameters $\theta$, auxiliary policy parameters $\psi$, reweighting parameters $\phi$
\STATE Construct mixed training set $\mathcal{D}_{\text{mixed}} \gets \mathcal{D}_{\text{train}} \cup \mathcal{D}_{\text{synth}}$
\FOR{$n = 1$ \TO $N$}
    
    \STATE \textit{\% Phase 1: Update auxiliary policy $\psi$ on mixed training data}
    \FOR{$j = 1$ \TO $K_\psi$}
        \STATE $\psi \gets \psi - \eta_\psi \,\nabla_{\psi} \mathcal{L}_{\text{train}}(\psi, \phi; \mathcal{D}_{\text{mixed}})$
    \ENDFOR
    
    \STATE \textit{\% Phase 2: Synchronize primary policy with auxiliary copy}
    \STATE $\theta \gets \psi$
    
    \STATE \textit{\% Phase 3: Update primary policy $\theta$ on validation + mixed training data}
    \FOR{$j = 1$ \TO $K_\theta$}
        \STATE $\mathcal{L}_\theta \gets L_{\text{val}}(\theta; \mathcal{D}_{\text{val}}) \;+\; \alpha\, \mathcal{L}_{\text{train}}(\theta, \phi; \mathcal{D}_{\text{mixed}})$
        \STATE $\theta \gets \theta - \eta_\theta \,\nabla_{\theta} \mathcal{L}_\theta$
    \ENDFOR
    \STATE \textit{\% Phase 4: Update reweighting model $\phi$ via joint minimax loss}
    \FOR{$j = 1$ \TO $K_\phi$}
        \STATE $\mathcal{L}_\phi \gets \mathcal{L}_{\text{val}}(\theta; \mathcal{D}_{\text{val}}) \;+\; \alpha \big( \mathcal{L}_{\text{train}}(\theta, \phi; \mathcal{D}_{\text{mixed}}) - \mathcal{L}_{\text{train}}(\psi, \phi; \mathcal{D}_{\text{mixed}}) \big)$
        \STATE $\phi \gets \phi - \eta_\phi \,\nabla_{\phi} \mathcal{L}_\phi$
    \ENDFOR
\ENDFOR
\end{algorithmic}
\end{algorithm}

Traditional data mixture methods optimize a sampling vector $\boldsymbol{w} \in \Delta^{m}$ over $m$ coarse-grained data sources (e.g., different datasets or task categories). In contrast, BOOST learns a parameter-efficient \emph{data reweighting head} $w_\phi$ that assigns weight to each individual trajectory allowing much finer control over which examples influence learning.

Given $\mathcal D_{mixed}$
and a real validation set $\mathcal{D}_{\text{val}}$,
we train a language model $\pi_\theta$ using a reinforcement-learning-style loss $\mathcal{L}_{\text{RL}}$ (e.g., MC, or ILQL as defined in Section~\ref{sec:prelims-algos}).
The reweighting model $w_\phi$ takes as input the a trajectory $\tau$ and outputs a weight $w_\phi(\tau)$ that modulates its contribution to the training loss.
The weighted training loss is
\[
\mathcal{L}_{\text{train}}(\theta, \phi; \mathcal{D})
\coloneqq
\mathbb{E}_{\tau \in \mathcal{D}}
\left[
\tilde{w}_\phi(\tau) \cdot \mathcal{L}_{\text{RL}}(\theta; \tau)
\right],
\]
where 
$\tilde{w}_\phi(\tau)$ denotes normalized weights computed by applying a softmax over the minibatch $\mathcal{B}$: $\tilde{w}_\phi(\tau) = \exp(w_\phi(\tau)) / \sum_{\tau' \in \mathcal{B}} \exp(w_\phi(\tau'))$).
BOOST thus learns: (1) The language model parameters $\theta$, by minimizing the reweighted loss on real + synthetic training data. (2) The reweighting parameters $\phi$, by encouraging the resulting model $\pi_{\theta^*(\phi)}$ to perform well on real validation data.

\subsection{Efficient and Stable Parameter Updates}

Algorithm~\ref{alg:BOOST_optimization} presents the full BOOST training procedure. We introduce three key design choices to ensure stable updates and reliable convergence.

\textbf{(1) Alternating Frozen Updates.}
At each inner iteration, we update only one parameter group at a time, the auxiliary policy $\psi$, the primary policy $\theta$, or the reweighting head $\phi$, while keeping the others frozen. This decoupling prevents conflicting gradient signals from destabilizing training. 
Crucially, $\phi$ never receives gradients from a simultaneously shifting policy, and policy updates always see a fixed data weighting. 

\textbf{(2) Warm-Start Initialization of $\theta$ from $\psi$.}
Unlike \citet{kwon2023fully, pan2025scalebio}, where $\theta$ and $\psi$ are updated independently, we re-initialize $\theta \leftarrow \psi$ at the start of each outer iteration. Since $\psi$ tracks the inner-level optimum of $\mathcal{L}_{\text{train}}(\cdot, \phi)$, this gives $\theta$ a warm start that is already near the inner optimum, accelerating convergence and improving stability. This modification introduces no additional computational cost and requires no change to any gradient update rules.

\textbf{(3) Asymmetric Learning Rates Across Levels.}
The policy parameters ($\theta$, $\psi$) use a significantly higher learning rate than the reweighting head $\phi$, with $\phi$ updated up to an order of magnitude more slowly. This reflects the hierarchical structure of bilevel optimization: the lower-level policy must reach a stable, near-optimal solution before the upper-level weights are adjusted. Updating $\phi$ too quickly before the policy has settled introduces spurious gradients that drive the reweighter toward degenerate solutions.

\section{Theoretical Analysis: Trajectory Reweighting and Task Generalization}
\label{sec:theory}

As established in Section~\ref{sec:prelims}, we adopt the finite-horizon,
task-conditioned hitting-time specification: tasks $z$ are drawn from
$\mathcal{P}_{\mathrm{task}}$, each inducing a task-specific MDP
$\mathcal{M}_z$, and $T_z^\pi$ denotes the expected hitting time of policy
$\pi$ on task $z$, with $0 \le T_z^\pi \le H_{\max}$. Recall that, under this
specification, maximizing expected return is equivalent to minimizing
$\mathbb{E}_{z \sim \mathcal{P}_{\mathrm{task}}}[T_z^\pi]$, so the analysis
below bounds the latter.


PAC--Bayesian bounds compare a learned predictor to a fixed prior chosen before
training. In our setting, the natural prior is the current pretrained LLM before
task-specific RL fine-tuning. We denote by $P_0$ the corresponding prior over
policies, centered at the pretrained LLM $\pi_0$. 


Let $\mathcal{Q}_\theta$ denote the posterior over stochastic policies, induced by the
fine-tuned LLM with weights $\theta$. To keep notation compact, write
$T_z^{\pi_\theta}
\coloneqq
\mathbb E_{\pi\sim \mathcal{Q}_\theta}[T_z^\pi]$.
Let $\mathcal P_\phi$ denote the weighted mixed task distribution induced by the
fixed trajectory weights $\tilde w_\phi$ over real and synthetic data. Define
\[
\Delta_{\mathrm{task}}(\phi)
=
\sup_{\pi\in\Pi}
\left|
\mathbb E_{z\sim\mathcal P_{\mathrm{task}}}[T_z^\pi]
-
\mathbb E_{z\sim\mathcal P_\phi}[T_z^\pi]
\right|.
\]
This term measures how much the weighted mixed distribution differs from the
target task distribution for the policy class under consideration.
 

We define effective sample size as $n_{\mathrm{eff}}(\tilde{w}_\phi) = \frac{1}{(\sum_{i=1}^n \tilde{w}_{\phi,i}^2)}$, where $\tilde w_{\phi,i}$ are trajectory weights.

\begin{theorem}[Weight-conditional PAC--Bayesian hitting-time bound]
\label{thm:weighted_pac_hitting_time}
Fix normalized weights $\tilde w_\phi$ over $\mathcal D_{\mathrm{mixed}}$.
Let $\mathcal P_0$ be the pretrained-policy prior (chosen independently of the
finetuning data), and let $\mathcal Q_\theta$ be the posterior induced
by the learned policy. Then there exists   a universal constant $C_1>0$ such that, for any $\delta\in(0,1)$, with probability at least
$1-\delta$,
\begin{equation}
\begin{aligned}
\mathbb E_{z\sim\mathcal P_{\mathrm{task}}}
\!\left[T_z^{\pi_\theta}\right]
\le\;&
\underbrace{
\sum_{i=1}^{n}
\tilde w_{\phi,i}T_{z_i}^{\pi_\theta}
}_{\text{weighted empirical time}}
+
\underbrace{
C_1H_{\max}
\sqrt{
\frac{
\mathrm{KL}(\mathcal Q_\theta\|\mathcal P_0)+\log(2/\delta)
}{
n_{\mathrm{eff}}(\tilde w_\phi)
}
}
}_{\text{policy complexity}}
+
\underbrace{
\Delta_{\mathrm{task}}(\phi)
}_{\text{task-shift penalty}} .
\end{aligned}
\label{eq:weighted_pac_hitting_time}
\end{equation}
\end{theorem}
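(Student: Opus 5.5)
The plan is to split the left-hand side into a task-shift term and a generalization term, and to control the latter with a weighted PAC--Bayesian inequality whose concentration step uses the weighted Hoeffding lemma instead of the unweighted one. Concretely, we write
\[
\mathbb E_{z\sim\mathcal P_{\mathrm{task}}}[T_z^{\pi_\theta}]
=
\mathbb E_{\pi\sim\mathcal Q_\theta}\mathbb E_{z\sim\mathcal P_{\phi}}[T_z^{\pi}]
+\Bigl(\mathbb E_{\pi\sim\mathcal Q_\theta}\bigl(\mathbb E_{z\sim\mathcal P_{\mathrm{task}}}[T_z^{\pi}]-\mathbb E_{z\sim\mathcal P_{\phi}}[T_z^{\pi}]\bigr)\Bigr).
\]
The bracketed term is at most $\Delta_{\mathrm{task}}(\phi)$, since an average over $\pi\sim\mathcal Q_\theta$ of quantities each bounded by the supremum over $\Pi$ is itself bounded by that supremum (assuming the support of $\mathcal Q_\theta$ lies in $\Pi$). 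It then remains to bound $\mathbb E_{\mathcal Q_\theta}\mathbb E_{\mathcal P_\phi}[T_z^\pi]$ by the weighted empirical time plus the complexity term.

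For this we fix the weights $\tilde w_\phi$ and model the tasks $z_1,\dots,z_n$ as independent, with $z_i$ drawn from its source distribution (real or synthetic) so that $\sum_i\tilde w_{\phi,i}\,\mathbb E[T_{z_i}^\pi]=\mathbb E_{z\sim\mathcal P_\phi}[T_z^\pi]$. This is essentially the definition of $\mathcal P_\phi$ as the weight-induced mixture. For a fixed $\pi$, define $f(\pi)=\mathbb E_{\mathcal P_\phi}[T_z^\pi]-\sum_i\tilde w_{\phi,i}T_{z_i}^\pi$. This is a weighted sum of independent centered variables, each with range $\tilde w_{\phi,i}H_{\max}$. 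Hoeffding's lemma then gives, for every $\lambda>0$,
\[
\mathbb E\, e^{\lambda f(\pi)}\le \exp\!\Bigl(\tfrac{\lambda^2 H_{\max}^2}{8}\textstyle\sum_i\tilde w_{\phi,i}^2\Bigr)=\exp\!\Bigl(\tfrac{\lambda^2H_{\max}^2}{8\,n_{\mathrm{eff}}}\Bigr).
\]
Because $\mathcal P_0$ does not depend on the data, we may integrate over $\pi\sim\mathcal P_0$ and apply Fubini and Markov. With probability at least $1-\delta$ this yields $\mathbb E_{\mathcal P_0}e^{\lambda f}\le \delta^{-1}\exp(\lambda^2H_{\max}^2/(8n_{\mathrm{eff}}))$. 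The Donsker--Varadhan change of measure, $\mathbb E_{\mathcal Q}[\lambda f]\le \mathrm{KL}(\mathcal Q\|\mathcal P_0)+\log\mathbb E_{\mathcal P_0}e^{\lambda f}$, then holds simultaneously for all posteriors $\mathcal Q$, and in particular for the data-dependent $\mathcal Q_\theta$.

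Putting these together gives
\[
\mathbb E_{\mathcal Q_\theta}f\le \frac{\mathrm{KL}(\mathcal Q_\theta\|\mathcal P_0)+\log(1/\delta)}{\lambda}+\frac{\lambda H_{\max}^2}{8n_{\mathrm{eff}}}.
\]
The main obstacle is the choice of $\lambda$. The optimal value depends on $\mathrm{KL}(\mathcal Q_\theta\|\mathcal P_0)$, which is data dependent, so $\lambda$ cannot be fixed in advance. The remedy I would try first is the standard grid/union-bound trick. We take a geometric grid $\lambda_k=2^k\sqrt{8n_{\mathrm{eff}}}/H_{\max}$ with confidence $\delta_k=\delta/2^{k+1}$, apply the inequality to all grid points at once, and pick the grid point within a factor of 2 of the optimum. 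The extra $\log$ terms and the factor-2 loss are absorbed into $C_1$ together with $\log(2/\delta)$. This is where the universal constant and the $\log(2/\delta)$ in the statement come from. When the KL is so large that no grid point is close to the optimum, the bound exceeds $H_{\max}$ and holds trivially because $T\le H_{\max}$. Substituting the optimized bound on $\mathbb E_{\mathcal Q_\theta}f$ into the decomposition of the first paragraph gives \eqref{eq:weighted_pac_hitting_time}.

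A secondary subtlety is that $\tilde w_\phi$ must be independent of the samples on which we concentrate, which is why the theorem is stated conditionally on fixed weights. I would state this explicitly as the independence assumption under which the Hoeffding step is valid.
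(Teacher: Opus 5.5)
Your proof is correct and follows the paper's route. It uses the same split into a $\mathcal P_\phi$-generalization term plus $\Delta_{\mathrm{task}}(\phi)$, and the same key observation: the $i$th weighted centered loss has range $\tilde w_{\phi,i}H_{\max}$, so the concentration term is governed by $\sum_i \tilde w_{\phi,i}^2 = 1/n_{\mathrm{eff}}(\tilde w_\phi)$. The only difference is that the paper cites the PAC--Bayesian Hoeffding--Azuma inequality of Seldin et al.\ as a black box (its grid parameter $c$ and slack $\varepsilon_c(Q)$ are later absorbed into $C_1$), whereas you re-derive that inequality from Hoeffding's lemma, Donsker--Varadhan and a geometric $\lambda$-grid. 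As a minor aside, since $\mathrm{KL}(\mathcal Q_\theta\|\mathcal P_0)+\log(2/\delta)\ge\log 2$, your infinite grid always has a point within a constant factor of the optimal $\lambda$, so the ``trivial large-KL'' case you mention never needs separate handling.
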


\begin{proof}[Proof sketch]
For fixed weights, apply the PAC--Bayesian Hoeffding--Azuma inequality of
\citet{seldin2012pac} to the bounded loss
$T_z^\pi/H_{\max}\in[0,1]$. The weighted centered losses form martingale
differences, and the $i$th increment has range length
$\tilde w_{\phi,i}$. Thus the Hoeffding--Azuma range term is
$\sum_i\tilde w_{\phi,i}^2=1/n_{\mathrm{eff}}(\tilde w_\phi)$.
Multiplying by $H_{\max}$ gives a bound on
$\mathbb E_{z\sim\mathcal P_\phi}[T_z^{\pi_\theta}]$ in terms of the weighted
empirical hitting time and the PAC--Bayesian complexity term. Finally,
\[
\mathbb E_{z\sim\mathcal P_{\mathrm{task}}}[T_z^{\pi_\theta}]
\le
\mathbb E_{z\sim\mathcal P_\phi}[T_z^{\pi_\theta}]
+
\Delta_{\mathrm{task}}(\phi).
\]
This gives Eq.~\eqref{eq:weighted_pac_hitting_time}. The proof is in
Appendix~\ref{app:weighted_pac_bayes}, which shows a slightly sharper statement
by applying the PAC--Bayesian Hoeffding--Azuma inequality of
\citet{seldin2012pac}.
\end{proof}

The bound is stated for a fixed weight vector $\tilde w_\phi$ and analyzes the inner learning problem once the outer BOOST reweighting model has produced weights. Adding synthetic data
uniformly can increase coverage and effective sample size, but only when those
trajectories have low hitting time and do not increase the task-shift penalty.
Conversely, placing nearly all mass on a small number of apparently good
trajectories can reduce the empirical term but lowers
$n_{\mathrm{eff}}$, increasing the complexity penalty. BOOST aims for the useful
middle regime: it assigns weight to synthetic trajectories that improve
validation performance, discounts trajectories that introduce poor behavior or
task shift, and preserves enough effective sample size for the resulting policy
to generalize to new tasks.

\section{Experiments}



We evaluate BOOST on three multi-turn dialogue environments: two from the \textbf{LMRL-Gym} benchmark \citep{abdulhai2025lmrl}, 
testing strategic reasoning and sequential decision-making through natural language,
plus a benchmark for multi-turn LLM tool use in a realistic web navigation setting. 
Although each environment specifies its reward in its own natural units
(questions-remaining, turns-remaining, or step-discounted match score),
in all three cases the reward is equivalent to the finite-horizon
hitting-time setup of Section~\ref{sec:prelims-mdp}.


\paragraph{Guess (20 Questions).}
A strategic information-gathering task from LMRL-Gym with 40K offline dialogues where an agent tries to identify a hidden object by posing yes/no questions to an oracle. Each question receives a binary response; the agent must iteratively narrow the hypothesis space to find the object. Effective policies ask maximally informative questions to efficiently eliminate candidate hypotheses. \textit{Reward:} The agent receives points equal to questions remaining upon correct identification; incorrect guesses or exhausting the limit yield zero reward. 


\paragraph{Car Dealer Negotiation.}
An LMRL-Gym bargaining task with 4.4K offline dialogues where the agent acts as a car salesperson negotiating with a buyer.
The agent proposes prices and responds to buyer counteroffers across multiple dialogue turns until 
an agreement or rejection occurs. The task requires strategic persuasion and principled price management.
\textit{Reward:} Rewards depend on the final negotiation outcome and how quickly it is reached:
reward equals the number of dialogue turns remaining at the point of agreement; failed negotiations yield zero reward.

\paragraph{WebShop Agent.}
WebShop \citep{yao2022webshop} is an online shopping benchmark featuring diverse structured and unstructured text. This including product titles, descriptions, and attributes—and requires the agent to complete purchase tasks based on natural language queries (e.g., \textit{``I am looking for a nightstand with drawers, a nickel finish, and a price below \$140''}). The agent has multi-turn interaction with a simulated webpage through search queries, clicks, filter selections, and page navigation. We adapted the original online benchmark
for our offline setting, we use GPT-4 to collect 2,000 realistic interaction trajectories as training data.
\textit{Reward:} Performance is measured using two metrics: attribute match score (the proportion of desired product attributes satisfied, averaged across episodes) and success rate (the percentage of episodes in which all requirements are fully met). The reward signal used during training is the attribute match score, discounted by steps to checkout.


\paragraph{Data Generation}

To generate synthetic data $\mathcal{D}_{synth}$ for multi-turn RL, we use offline trajectories as a starting point. 
For each environment, we initialize from an offline trajectory's first two steps, then simulate the remainder by prompting an LLM to alternate roles as agent and environment. The trajectory ends at the turn limit or natural conclusion. No external simulator is used.
To ensure a fair comparison, we use the same LLM that is subsequently fine-tuned for data generation.

A central goal of this work is to evaluate generalization from limited training data to unseen task categories. Thus, we partition the data by task category rather than by individual task instance. In the 20 Questions task, categories correspond to the semantic class of the target word (e.g., tools, fruits); in Car Dealer Negotiation, they correspond to car brands, which implicitly encode information such as price range and feature set; in WebShop, they correspond to product classes (e.g., beauty products, apparel). Offline trajectories and the synthetic data derived from them are split by category into train, validation, and evaluation sets. The validation set shares the same category distribution as the evaluation set but is kept deliberately small, reflecting realistic conditions in which only limited labeled data from target tasks is available. This setup directly tests the model's ability to generalize across task categories rather than merely interpolate within them. Further details in the Appendix.

\subsection{Evaluation}

For each environment, we train using two RL algorithms: Monte Carlo Regression and ILQL. Within each algorithm, we evaluate four methods:

\begin{enumerate}[leftmargin=*]
    \item \textbf{Vanilla.} Training uses only the available offline dataset.
    \item \textbf{Vanilla+Synthetic.} Training uses the offline dataset augmented with synthetic trajectories.
    \item \textbf{Bilevel Only.} Training uses only offline data, but optimizes sample weights via bilevel updates on the train/validation split.
    \item \textbf{BOOST.} Training uses offline data augmented with synthetic trajectories, optimized via the bilevel procedure described in Algorithm~\ref{alg:BOOST_optimization}.
\end{enumerate}

To assess the effect of data availability, each regime is evaluated under both a \textbf{high-data} and a \textbf{low-data} setting, where the low-data setting uses an order of magnitude fewer training examples than the high-data setting. This directly probes each method's robustness under the limited training data conditions that motivate this work.

While all methods train on offline data, policies are evaluated by interacting with a live simulated environment over a set of held-out goals. This constitutes a deliberately challenging evaluation regime that tests generalisation across unseen task categories and exposes each policy to the distribution shift inherent in moving from static, logged trajectories to dynamic online interaction where the policy's own outputs shape subsequent environment responses in ways not observed during training, more closely reflecting the conditions these policies would face in real-world deployment.

For the 20 Questions and Car Dealer Negotiation tasks, we fine-tune a GPT-2 model, which has been shown to perform adequately on these tasks in prior work~\cite{abdulhai2025lmrl}. WebShop is a substantially more challenging environment, requiring multi-step information gathering and diverse action types. Thus, we fine-tune a Llama-8B model using parameter-efficient fine-tuning via QLoRA. To ensure fair comparison across methods, the LLM used to generate synthetic data is always identical to the one being fine-tuned. 
Further implementation details are in the Appendix.









\section{Results and Discussion}

\begin{table}[t]
\centering
\caption{
    Performance of MC Return and ILQL across three environments under low- and high-data regimes.
    Results are mean $\pm$ standard error computed over 64 evaluation runs. Bilevel reweighting consistently outperforms vanilla training across all settings while
BOOST shows the largest gains in low-data regimes.
}
\label{tab:main_results_full}
\setlength{\tabcolsep}{6pt}
\renewcommand{\arraystretch}{1}
\small
\begin{tabular}{lll cccc}
\toprule
\textbf{Method} & \textbf{Environment} & \textbf{Data} 
    & \textbf{Vanilla} 
    & \textbf{Vanilla+Syn} 
    & \textbf{Bilevel Only} 
    & \textbf{BOOST (ours)} \\
\midrule
\multirow{6}{*}{MC}
    & \multirow{2}{*}{Twenty Questions}
        & Low  & $1.01 \pm 0.13$ & $1.01 \pm 0.11$ & $2.90 \pm 0.22$ & $\mathbf{3.46 \pm 0.22}$ \\
    & & High & $2.22 \pm 0.18$ & $2.19 \pm 0.23$ & $\mathbf{3.83 \pm 0.23}$ & $3.48 \pm 0.21$ \\
\cmidrule(lr){2-7}
    & \multirow{2}{*}{Car Negotiation}
        & Low  & $26.73 \pm 0.39$ & $31.08 \pm 0.00$ & $36.28 \pm 2.23$ & $\mathbf{40.92 \pm 1.86}$ \\
    & & High & $32.27 \pm 0.39$ & $35.09 \pm 0.39$ & $41.31 \pm 1.71$ & $\mathbf{41.42 \pm 1.71}$ \\
\cmidrule(lr){2-7}
    & \multirow{2}{*}{Webshop Agent}
        & Low  & $0.24 \pm 0.02$ & $0.23 \pm 0.02$ & $0.32 \pm 0.03$ & $\mathbf{0.40 \pm 0.03}$ \\
    & & High & $0.38 \pm 0.02$ & $0.31 \pm 0.02$ & $0.43 \pm 0.03$ & $\mathbf{0.45 \pm 0.03}$ \\
\midrule
\multirow{6}{*}{ILQL}
    & \multirow{2}{*}{Twenty Questions}
        & Low  & $0.38 \pm 0.08$ & $0.45 \pm 0.09$ & $0.51 \pm 0.11$ & $\mathbf{0.53 \pm 0.10}$ \\
    & & High & $2.60 \pm 0.21$ & $2.04 \pm 0.19$ & $5.38 \pm 0.38$ & $\mathbf{5.83 \pm 0.28}$ \\
\cmidrule(lr){2-7}
    & \multirow{2}{*}{Car Negotiation}
        & Low  & $27.22 \pm 0.19$ & $25.94 \pm 0.93$ & $29.08 \pm 2.01$ & $\mathbf{34.91 \pm 2.37}$ \\
    & & High & $35.33 \pm 0.78$ & $33.00 \pm 0.78$ & $\mathbf{37.55 \pm 2.40}$ & $37.36 \pm 2.39$ \\
\cmidrule(lr){2-7}
    & \multirow{2}{*}{Webshop Agent}
        & Low  & $0.30 \pm 0.02$ & $0.27 \pm 0.02$ & $0.32 \pm 0.03$ & $\mathbf{0.38 \pm 0.02}$ \\
    & & High & $0.31 \pm 0.02$ & $0.27 \pm 0.02$ & $\mathbf{0.41 \pm 0.02}$ & $0.36 \pm 0.02$ \\
\bottomrule
\end{tabular}
\end{table}

In Table \ref{tab:main_results_full}, we report test-time reward performance for the MC Returns and ILQL fine-tuning methods averaged over 64 evaluation rollouts. We discuss several key findings below.

\paragraph{Bilevel reweighting consistently improves performance.} Across both fine-tuning methods, all three environments, and both data regimes, BOOST and Bilevel strictly outperform both vanilla versions. This consistent pattern strongly suggests that offline datasets are heterogeneous in quality; reweighting examples to optimize downstream validation performance is a reliable and general improvement over uniform training. The margins are often substantial: in the MC Returns setting , BOOST improves over both vanilla settings by a substantial margin in the low-data regime, successfully completing the 20 Questions task an average of $\sim2$ turns earlier and the Car Dealer Negotiation an average of $\sim9$ turns earlier. This improvement is also statistically significant.

\paragraph{Naively adding synthetic data can hurt}. While more data generally aids generalization, unweighted synthetic augmentation is unreliable. In the ILQL setting, Vanilla with Synthetic underperforms Vanilla in four out of six settings, Car Negotiation high-data (
33.00 vs
35.33), low-data (25.94 vs 27.22) and both Webshop regimes (
0.27 vs 0.30/0.31). A similar pattern appears in MC Returns, Vanilla with Synthetic slightly hurts in Twenty Questions high-data (2.19 vs 2.22) and Webshop high-data (0.31 vs 0.38). This demonstrates that synthetic data introduces noise that, without proper reweighting, the model cannot filter out.

\paragraph{BOOST achieves strongest results in low-data regimes.}
By combining bilevel reweighting with synthetic augmentation, BOOST yields the best performance in 10 out of 12 settings across both tables. The advantage is most pronounced in the low-data regime, where it is the clear winner across all six settings, outperforming even Bilevel thus demonstrating that synthetic data provides genuine
additional value when reweighted appropriately.
The biggest gains are under MC Returns where BOOST improves over the best vanilla baseline by 242.6\% in Twenty Questions (3.46 vs 1.01), 31.7\% in Car Negotiation (40.92 vs. 31.08), and 66.7\% in Webshop (0.40 vs. 0.24). This confirms our motivation: when real trajectories are scarce, synthetic data increases conversational diversity, while bilevel reweighting ensures that only synthetic transitions genuinely aligned with the validation objective receive high weight.

\begin{figure*}[] 
    \centering
    \hfill
    \begin{subfigure}[t]{0.30\textwidth}
        \centering
        \includegraphics[width=\textwidth]{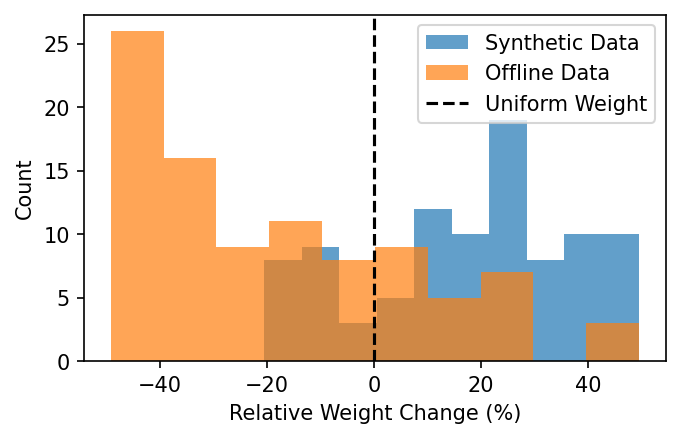}
        \caption{Twenty Questions}
        \label{fig:weight_distra}
    \end{subfigure}
    \begin{subfigure}[t]{0.30\textwidth}
        \centering
        \includegraphics[width=\textwidth]{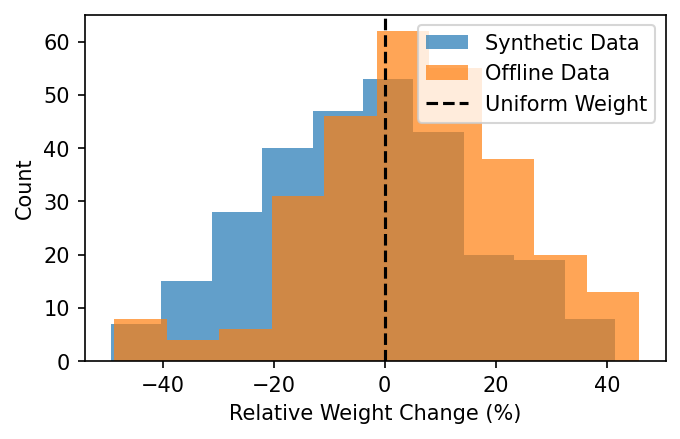}
        \caption{Car price Negotiation.}
        \label{fig:weight_distrb}
    \end{subfigure}
    \begin{subfigure}[t]{0.30\textwidth}
        \centering
        \includegraphics[width=\textwidth]{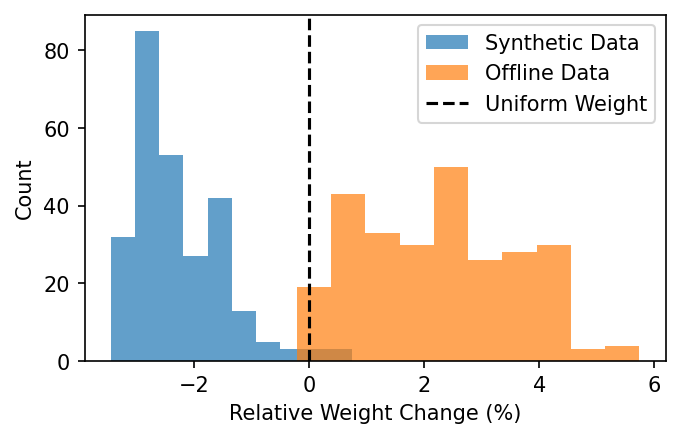}
        \caption{ Webshop Agent}
        \label{fig:weight_distrc}
    \end{subfigure}
    \hfill
    \label{fig:weight_distr}
    \centering
    \hfill
    \begin{subfigure}[t]{0.30\textwidth}
        \centering
        \includegraphics[width=\textwidth]{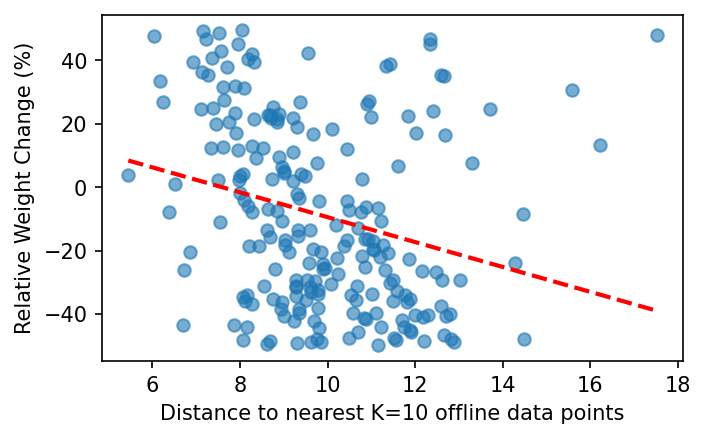}
        \caption{Twenty Questions}
        \label{fig:weight_distancea}
    \end{subfigure}
    \begin{subfigure}[t]{0.30\textwidth}
        \centering
        \includegraphics[width=\textwidth]{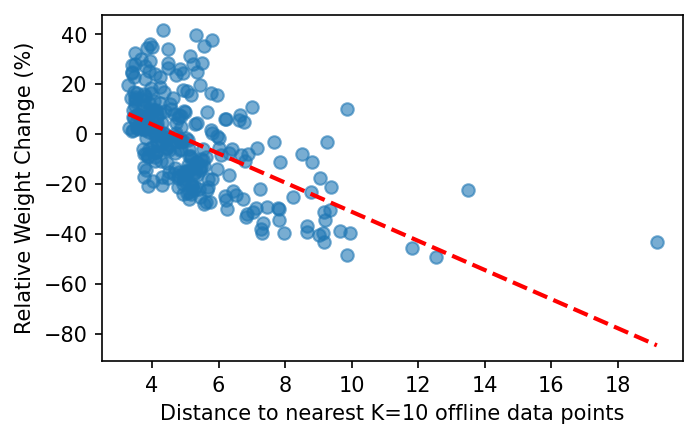}
        \caption{Car Price Negotiation}
        \label{fig:weight_distanceb}
    \end{subfigure}
    \begin{subfigure}[t]{0.30\textwidth}
        \centering
        \includegraphics[width=\textwidth]{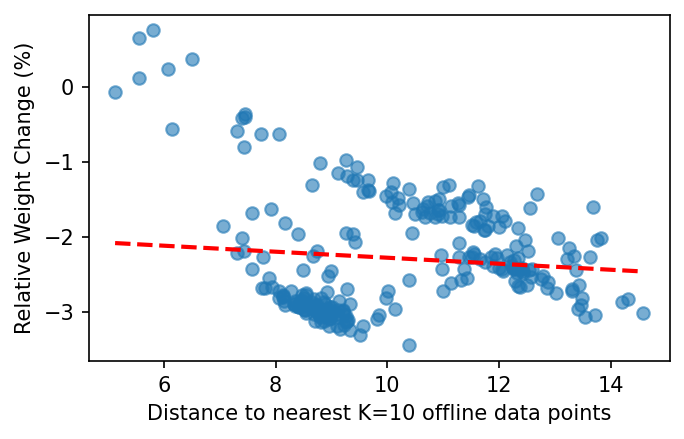}
        \caption{Webshop Agent}
        \label{fig:weight_distancec}
    \end{subfigure}
    \hfill

    \caption{Learned weight distributions across environments. \textit{Top:} relative weight change
from uniform initialization for synthetic and offline points. \textit{Bottom:} weight
change vs.\ distance to nearest offline neighbours.}
    \label{fig:weight_all}
\end{figure*}


\paragraph{Analyzing Learnt Weights} 

\textit{Weights are non-uniform and vary on a per-point basis.}
Initially, all data points are assigned uniform sampling weights. Over the course of BOOST training, the data reweighting head $w_\phi$ learns heterogeneous weights across data points. In Figures~\ref{fig:weight_distra}--\ref{fig:weight_distrc}, we plot the relative change in weights from the uniform initialization, grouped separately for synthetic and offline data. Two observations stand out.
First, the learned weights vary substantially across data points, confirming the bilevel objective identifies differential utility rather than collapsing to a uniform solution. Second, weights do not separate by source: some synthetic points receive high weight while some offline points are downweighted.
This demonstrates that $w_\phi$ cannot be approximated by any simple heuristic that uniformly downweights synthetic data and upweights offline data. 
Instead, the head performs fine-grained, per-point discrimination based on validation signals. This underscores the need for BOOST's learned reweighting over simpler source-based filtering or fixed mixing ratios.

\textit{Highly-weighted synthetic points are those closest to the offline distribution.}
In Figures~\ref{fig:weight_distancea}--\ref{fig:weight_distancec}, we test the hypothesis that synthetic data points receiving high weight are those most similar to true offline data points. For each synthetic data point, we plot the relative weight change from uniform against its mean $L_1$ distance in embedding space to its $K\!=\!10$ nearest offline neighbours. Across all environments, weight and distance from offline data are inversely related: synthetic points closer to the offline distribution receive higher weights, while outliers are discounted. Pearson correlations confirm this trend, showing negative correlations of $-0.26$ (Twenty Questions), $-0.49$ (Car Dealer), and $-0.10$ (Webshop). This evidence confirms BOOST doesn't indiscriminately upweight synthetic data; instead, it selectively incorporates transitions most consistent with real offline data.

\begin{wrapfigure}{r}{0.35\textwidth}
    \centering
    \vspace{-10pt}  
    \includegraphics[width=0.35\textwidth]{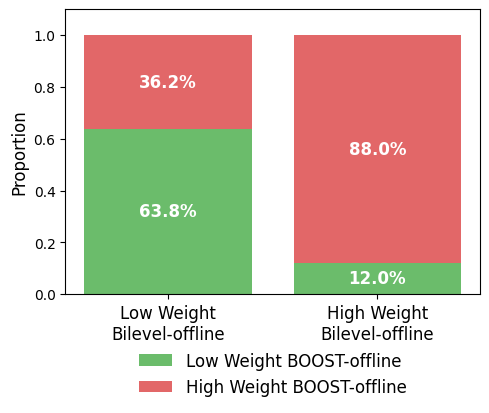}
    \caption{Car Dealer environment weight shift from Bilevel to BOOST}
    \label{fig:weight_stack_car}
    \vspace{-10pt}  
\end{wrapfigure}


\textit{Adding synthetic data rescues informative offline points.}
Finally, we examine how the weights assigned to offline data points shift when synthetic data is introduced into BOOST. Concretely, we compare offline point weights under Bilevel-Offline-Only against their weights under BOOST, and ask whether points that were highly weighted under the offline-only objective remain highly weighted when synthetic data is added. In the Car Dealer environment (see Figure \ref{fig:weight_stack_car}), $88\%$ of highly-weighted offline points retain high weight under BOOST, demonstrating that the bilevel objective is stable and does not discard previously identified high-quality transitions. More strikingly, among offline points that received low weight under the offline-only objective, BOOST promotes $36.2\%$ of them to high weight. This suggests that these points possessed latent signal that was previously obscured, likely due to distributional gaps in the offline dataset, and that the additional coverage provided by synthetic data gives the bilevel optimizer sufficient context to recognise and recover their value.

In addition to these three analyses, we provide qualitative comparison of synthetic trajectories with high and low weights in the Appendix. Together, they showcase that BOOST learns to upweight offline data that is inherently informative, selectively incorporates synthetic data that resembles the offline distribution, and recovers signal from offline points that were undervalued in the absence of synthetic augmentation.

\section{Conclusion}
We presented BOOST, a bilevel framework for improving multi-turn offline RL fine-tuning
by learning trajectory-level weights over mixed real and synthetic data. Across three
environments and two RL algorithms, BOOST consistently outperforms uniform training and
static synthetic augmentation, with the largest gains in low-data regimes. Analysis shows
the learned weights are semantically meaningful: the reweighting head upweights synthetic
trajectories nearest to the real distribution, recovers signal from undervalued offline
points, and suppresses noisy transitions without explicit quality supervision. Together
with our PAC-Bayesian hitting-time bound, these results establish principled
synthetic-data reweighting as a practical path toward data-efficient multi-turn LLM
adaptation. A current limitation is that reweighting operates at the trajectory level;
finer-grained per-timestep reweighting, which could recover useful signal from partially
good trajectories, is a natural direction for future work.



\bibliographystyle{plainnat}
\bibliography{references}


\appendix

\section{Background on PAC--Bayesian Bounds}
\label{app:pac_bayes_background}

PAC--Bayesian analysis gives high-probability generalization bounds for
data-dependent randomized predictors \citep{mcallester1999pac,seeger2002pac,catoni2007pac,germain2009pac}. Let $\Pi$ be a policy class, let $P_0$ be a prior distribution over $\Pi$, and let $Q$ be a posterior distribution over $\Pi$ chosen after observing the training sample. The prior $P_0$ must be fixed independently of the sample used in the bound. For a bounded loss $\ell(z,\pi)\in[0,1]$, define
\[
L(Q)
=
\mathbb E_{\pi\sim Q}
\mathbb E_{Z\sim\mathcal P}[\ell(Z,\pi)],
\qquad
\widehat L(Q)
=
\mathbb E_{\pi\sim Q}
\left[
\frac{1}{n}\sum_{i=1}^n \ell(Z_i,\pi)
\right].
\]
In the standard i.i.d. setting, where $Z_1,\ldots,Z_n\sim\mathcal P$, PAC--Bayesian bounds control $L(Q)$ by $\widehat L(Q)$ plus a complexity term of the form
\[
O\!\left(
\sqrt{
\frac{\mathrm{KL}(Q\|P_0)+\log(1/\delta)}{n}
}
\right).
\]
The KL term measures how far the learned posterior moves from the prior, while the $1/\sqrt n$ factor comes from concentration of the uniform empirical average.

In our setting, predictors are stochastic policies. The natural prior is a distribution over policies centered at the pretrained LLM before task-specific RL fine-tuning. This is the reference policy available before the task-specific trajectories are observed. The posterior $Q_\theta$ is the distribution over policies induced by the fine-tuned LLM. Thus $\mathrm{KL}(Q_\theta\|P_0)$ measures deviation from pretrained behavior, rather than deviation from a uniform distribution over utterances.

The technical issue is that BOOST does not use the uniform empirical loss. For fixed normalized weights $w\in\Delta_n$, the empirical loss is
\[
\widehat L_w(Q)
=
\mathbb E_{\pi\sim Q}
\left[
\sum_{i=1}^n w_i\ell(Z_i,\pi)
\right].
\]
The standard i.i.d. PAC--Bayesian bound does not directly apply to this weighted average by simply replacing $n$ with an effective sample size. The concentration term must account for the nonuniform ranges of the weighted summands.

Section~\ref{app:weighted_pac_bayes} handles this point formally. We apply the PAC--Bayesian Hoeffding--Azuma inequality of \citet{seldin2012pac} to the weighted centered losses. In the independent-sample setting considered here, these centered losses form martingale differences, and their range lengths are exactly the weights $w_i$. This yields the weighted PAC--Bayesian bound in Corollary~\ref{cor:weighted_pac_bayes_bounded}, the exact Seldin-form hitting-time bound in Theorem~\ref{thm:exact_seldin_weighted_hitting_time}, and the simplified constant-form bound used in the main text in Corollary~\ref{cor:simplified_weighted_hitting_bound}.

\begin{remark}[On the independence assumption]
\label{rem:independence}
Corollary~\ref{cor:weighted_pac_bayes_bounded} assumes $Z_i \perp (Z_1,\ldots,Z_{i-1})$. In our experimental protocol (Section~\ref{sec:theory} data generation), synthetic trajectories are initialized from prefixes of real trajectories, so some synthetic tasks $z_j^{\mathrm{synth}}$ are not independent of real tasks $z_i^{\mathrm{real}}$. The martingale construction extends to this case by placing all real tasks first in the filtration $\{\mathcal F_i\}$ and treating synthetic tasks as conditionally independent given the real-task $\sigma$-algebra $\mathcal G$; $\mathcal P_\phi$ then becomes a $\mathcal G$-random measure, and the concentration bound holds conditionally on $\mathcal G$ with the same constants. We omit the measurability bookkeeping.
\end{remark}

\section{Proof of the Weighted Hitting-Time Bound}
\label{app:weighted_pac_bayes}

We prove the bound used in Section~\ref{sec:theory}. Throughout this section,
the normalized weights are fixed. Thus the result is weight-conditional: it
analyzes the inner learning problem after the outer BOOST reweighting model has
produced a weight vector.

\paragraph{Auxiliary result.}
We use the following range-length form of the PAC--Bayesian Hoeffding--Azuma
inequality of \citet{seldin2012pac}.

\begin{theorem}[PAC--Bayesian Hoeffding--Azuma; \citealp{seldin2012pac}]
\label{thm:seldin_pac_bayes_azuma}
Let $\Pi$ be a policy class and let $P_0$ be a prior distribution over $\Pi$
chosen independently of the training sample. For each $\pi\in\Pi$, let
$X_1(\pi),\ldots,X_n(\pi)$ be martingale differences with respect to a
filtration $\{\mathcal F_i\}_{i=0}^n$:
\[
\mathbb E[X_i(\pi)\mid \mathcal F_{i-1}]=0.
\]
Assume that, for each $i$, there is a deterministic constant $r_i\ge0$ such
that for every $\pi\in\Pi$, the random variable $X_i(\pi)$ is almost surely
contained in an interval of length at most $r_i$. Then, for any $1<c\le e$,
with probability at least $1-\delta$, every posterior distribution $Q$ over
$\Pi$ satisfies
\[
\left|
\mathbb E_{\pi\sim Q}
\left[
\sum_{i=1}^n X_i(\pi)
\right]
\right|
\le
\frac{1+c}{2\sqrt{2}}
\sqrt{
\left(
\mathrm{KL}(Q\|P_0)
+
\log\frac{2}{\delta}
+
\varepsilon_c(Q)
\right)
\sum_{i=1}^n r_i^2
},
\]
where
\[
\varepsilon_c(Q)
=
\frac{\log 2}{2\log c}
\left[
1+
\log\left(
1+
\frac{\mathrm{KL}(Q\|P_0)}{\log(2/\delta)}
\right)
\right].
\]
\end{theorem}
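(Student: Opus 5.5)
The plan is to follow the standard PAC--Bayesian recipe: an exponential moment bound for each fixed policy, averaged over the prior, then the Donsker--Varadhan change of measure, and finally a union bound over a geometric grid of temperatures $\lambda$ so that the optimization over $\lambda$ can depend on $Q$.

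\emph{Step 1 (fixed $\pi$, fixed $\lambda$).} Write $S_n(\pi)=\sum_{i=1}^n X_i(\pi)$ and $R=\sum_i r_i^2$. Conditionally on $\mathcal F_{i-1}$, the variable $X_i(\pi)$ has mean zero and lies in an interval of length $r_i$. Hoeffding's lemma therefore gives $\mathbb E[e^{\lambda X_i(\pi)}\mid\mathcal F_{i-1}]\le e^{\lambda^2 r_i^2/8}$. Peeling off the increments one at a time with the tower property yields $\mathbb E\bigl[e^{\lambda S_n(\pi)-\lambda^2R/8}\bigr]\le 1$.

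\emph{Step 2 (prior average and change of measure).} Since $P_0$ does not depend on the sample, Fubini gives $\mathbb E\,\mathbb E_{\pi\sim P_0}\bigl[e^{\lambda S_n(\pi)-\lambda^2R/8}\bigr]\le1$. By Markov's inequality, with probability at least $1-\delta'$ the inner prior average is at most $1/\delta'$. The Donsker--Varadhan inequality $\mathbb E_Q f\le \mathrm{KL}(Q\|P_0)+\log\mathbb E_{P_0}e^f$ then gives, simultaneously for all $Q$,
\[
\mathbb E_{\pi\sim Q}[S_n(\pi)]\le \frac{\mathrm{KL}(Q\|P_0)+\log(1/\delta')}{\lambda}+\frac{\lambda R}{8}.
\]
Applying the same argument to $-X_i(\pi)$ controls the other tail. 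Splitting the confidence budget between the two sides is what produces the $\log(2/\delta)$.

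\emph{Step 3 (grid over $\lambda$; the main obstacle).} For a given $A=\mathrm{KL}+\log(1/\delta')$, the optimal temperature is $\lambda^*=\sqrt{8A/R}$. This depends on $Q$ and so cannot be fixed in advance. I would take the grid $\lambda_j=c^j\sqrt{8\log(2/\delta)/R}$ for $j\ge0$ and assign it confidence $\delta_j=\delta 2^{-(j+1)}$ per tail, so the union bound over $j$ costs at most $\delta$ in total.

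For a given $Q$, I choose $j=\lfloor \tfrac{1}{2\log c}\log(1+\mathrm{KL}(Q\|P_0)/\log(2/\delta))\rfloor$. Then $\lambda_j$ lies within a multiplicative factor $c$ of the optimum for the effective complexity $A=\mathrm{KL}+\log(2/\delta)+\varepsilon_c(Q)$. The extra $\log 2^{j}\le j\log2$ coming from $\delta_j$ is exactly bounded by $\varepsilon_c(Q)$; the additive "$1+$" absorbs the floor.

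If $\lambda=\lambda^*/t$ with $t\in[1,c]$, then
\[
\frac{A}{\lambda}+\frac{\lambda R}{8}=\sqrt{\frac{AR}{8}}\,\Bigl(t+\frac1t\Bigr)\le \frac{1+c}{2\sqrt2}\sqrt{AR},
\]
using $t+1/t\le 1+c$ for $t\in[1,c]$ (valid since $c\ge 1$). This gives the stated constant.

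The delicate part is this bookkeeping. It requires verifying that the chosen index always places $\lambda_j$ within a factor $c$ of the true optimizer, on the correct side, even though the optimizer itself involves $\varepsilon_c(Q)$. It also requires that the restriction $c\le e$ keeps the $\varepsilon_c$ term and the rounding consistent. I would first check these inequalities with $A$ replaced by its upper bound and use monotonicity of $A/\lambda+\lambda R/8$ in $A$ to close the argument. Combining both tails gives the absolute-value statement.
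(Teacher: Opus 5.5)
The paper does not prove this statement; it imports it from \citet{seldin2012pac}. Your Steps~1--2 are the standard argument and are fine (conditional Hoeffding lemma with peeling, Fubini over the data-independent prior, Markov, Donsker--Varadhan, both tails). Step~3 does not go through, and the problem is more than bookkeeping.

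First, the confidence budget is off. Assigning $\delta 2^{-(j+1)}$ per tail sums to $\delta$ per tail, i.e.\ $2\delta$ overall. With a correct split, the price at $\lambda_j$ is $\log(2/\delta)+(j+1)\log 2$, not $+j\log 2$. The ``$1+$'' in $\varepsilon_c$ absorbs the leftover $\log 2$ only when $1/(2\log c)\ge 1$, i.e.\ $c\le\sqrt{e}$.

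Second, your floor index gives only $c^{2j}\le 1+\mathrm{KL}/\log(2/\delta)<c^{2j+2}$. So $\lambda_j$ is within a factor $c$ of the optimizer for $\mathrm{KL}+\log(2/\delta)$, not of $\sqrt{8A/R}$ with $A=\mathrm{KL}+\log(2/\delta)+\varepsilon_c(Q)$. At $\mathrm{KL}=0$ the ratio is $\sqrt{1+\varepsilon_c/\log(2/\delta)}$ with $\varepsilon_c=\log 2/(2\log c)$, which is far above $c$ when $c$ is near $1$. Your proposed monotonicity move, replacing the price by $A$ at fixed $\lambda_j$, is exactly the lossy step. For $c=1.1$, $\log(2/\delta)=1$, $\mathrm{KL}=0$ it gives $5.64\sqrt{R/8}$, against the target $(1+c)\sqrt{AR/8}\approx 4.52\sqrt{R/8}$.

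More fundamentally, the weights $2^{-(j+1)}$ on this grid cannot deliver the stated constant for $c$ close to $1$, whatever index you pick. The union-bound price grows linearly in $j$, with $j\approx\log(1+\mathrm{KL}/\log(2/\delta))/(2\log c)$, and it is divided by $\lambda_j$. By contrast, $\varepsilon_c$ only enters under the square root.

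Concretely, take $c=1.01$, $\delta=0.1$, $\mathrm{KL}=14\log 20$, and set $\eta=\log 2/\log 20$. After dividing by $\sqrt{\log(20)R/8}$, the best grid point gives $\min_{j\ge 0}\,[(15+(j+1)\eta)c^{-j}+c^{j}]\approx 15.47$ (about $15.43$ even with your price $j\log 2$). The claimed right-hand side is $2.01\sqrt{15+\varepsilon_c/\log 20}\approx 15.32$.

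To close the argument you must change the union-bound scheme. For instance, use weights decaying polynomially in $j$, so that the price grows only like $2\log j$. You must then verify the final comparison directly, keeping the true price in the numerator. The usable slack is not a placement of $\lambda_j$ within a factor $c$ of $\sqrt{8A/R}$. It is the identity $s^2+1=(1+c)s-(c-1)+(s-1)(s-c)\le(1+c)s-(c-1)$ for $s=c^{-j}\sqrt{1+\mathrm{KL}/\log(2/\delta)}\in[1,c)$, combined with the growth of the right-hand side in $\varepsilon_c$.
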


\begin{corollary}[Weighted PAC--Bayesian bound for bounded losses]
\label{cor:weighted_pac_bayes_bounded}
Let $\ell(z,\pi)\in[0,1]$ be a bounded loss. Let
$Z_1,\ldots,Z_n$ be independent tasks with $Z_i\sim\mathcal P_i$.
Fix weights $w\in\Delta_n$, where $w_i\ge0$ and $\sum_{i=1}^n w_i=1$.
Define
\[
\widehat L_w(\pi)
=
\sum_{i=1}^n w_i\ell(Z_i,\pi),
\qquad
L_w(\pi)
=
\sum_{i=1}^n
w_i
\mathbb E_{Z\sim\mathcal P_i}[\ell(Z,\pi)],
\]
and
\[
n_{\mathrm{eff}}(w)
=
\frac{1}{\sum_{i=1}^n w_i^2}.
\]
Then, for any $1<c\le e$, with probability at least $1-\delta$, every posterior
$Q$ over $\Pi$ satisfies
\[
\mathbb E_{\pi\sim Q}[L_w(\pi)]
\le
\mathbb E_{\pi\sim Q}[\widehat L_w(\pi)]
+
\frac{1+c}{2\sqrt{2}}
\sqrt{
\frac{
\mathrm{KL}(Q\|P_0)
+
\log\frac{2}{\delta}
+
\varepsilon_c(Q)
}{
n_{\mathrm{eff}}(w)
}
}.
\]
\end{corollary}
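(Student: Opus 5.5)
The proof will be a direct application of Theorem~\ref{thm:seldin_pac_bayes_azuma}, using the natural martingale differences built from the weighted, centered losses.

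\textbf{Construction.} Take the filtration $\mathcal F_i=\sigma(Z_1,\ldots,Z_i)$, with $\mathcal F_0$ trivial. For each $\pi\in\Pi$ define
\[
X_i(\pi)=w_i\bigl(\mathbb E_{Z\sim\mathcal P_i}[\ell(Z,\pi)]-\ell(Z_i,\pi)\bigr).
\]

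\textbf{Martingale-difference property.} The weights $w$ are fixed, so they are not random with respect to the sample. The $Z_i$ are independent with $Z_i\sim\mathcal P_i$. Hence
\[
\mathbb E[\ell(Z_i,\pi)\mid\mathcal F_{i-1}]=\mathbb E_{Z\sim\mathcal P_i}[\ell(Z,\pi)],
\]
and therefore $\mathbb E[X_i(\pi)\mid\mathcal F_{i-1}]=0$ for every $\pi$.

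\textbf{Range condition.} Since $\ell\in[0,1]$, the variable $X_i(\pi)$ lies in the interval $[w_i(m_i(\pi)-1),\,w_i m_i(\pi)]$, where $m_i(\pi)=\mathbb E_{Z\sim\mathcal P_i}[\ell(Z,\pi)]$. This interval has length $w_i$. So we may take the deterministic constants $r_i=w_i$, uniformly in $\pi$. Then
\[
\sum_i r_i^2=\sum_i w_i^2=1/n_{\mathrm{eff}}(w).
\]

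\textbf{Applying the theorem.} Summing the increments gives
\[
\sum_i X_i(\pi)=L_w(\pi)-\widehat L_w(\pi).
\]
Theorem~\ref{thm:seldin_pac_bayes_azuma} then states that, with probability at least $1-\delta$, simultaneously for all posteriors $Q$,
\[
\bigl|\mathbb E_{\pi\sim Q}[L_w(\pi)-\widehat L_w(\pi)]\bigr|\le\frac{1+c}{2\sqrt2}\sqrt{\bigl(\mathrm{KL}(Q\|P_0)+\log\tfrac2\delta+\varepsilon_c(Q)\bigr)/n_{\mathrm{eff}}(w)}.
\]
Dropping the absolute value and rearranging gives the one-sided statement of the corollary. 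Linearity of expectation lets $\mathbb E_{\pi\sim Q}$ pass through the finite sum.

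\textbf{Main obstacle.} The delicate point is checking the hypotheses of Theorem~\ref{thm:seldin_pac_bayes_azuma} exactly. Two things need care.

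First, the ranges $r_i$ must be deterministic and independent of $\pi$. This works because only the interval's \emph{length} must be bounded by $r_i$, not its location, so the $\pi$-dependence of $m_i(\pi)$ does no harm.

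Second, the prior $P_0$ must be independent of the sample, and $w$ must be fixed before the sample is drawn. If $w$ were chosen using the data, the conditional-mean-zero property would fail. This is exactly why the corollary is stated weight-conditionally.

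Measurability of $\pi\mapsto X_i(\pi)$ is assumed implicitly, as in the cited theorem.
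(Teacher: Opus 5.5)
Your proposal is correct and follows the paper's proof essentially step for step. It uses the same centered weighted increments $X_i(\pi)=w_i(m_i(\pi)-\ell(Z_i,\pi))$, the same independence-based martingale-difference check, the same observation that only the interval length $w_i$ (not its $\pi$-dependent location) matters so that $r_i=w_i$ and $\sum_i r_i^2=1/n_{\mathrm{eff}}(w)$, and the same upper-tail reading of Theorem~\ref{thm:seldin_pac_bayes_azuma}. Your explicit filtration $\mathcal F_i=\sigma(Z_1,\ldots,Z_i)$ and your remark that $w$ must be fixed before sampling spell out what the paper leaves implicit.
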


\begin{proof}
Fix $\pi\in\Pi$ and define
\[
\mu_i(\pi)
=
\mathbb E_{Z\sim\mathcal P_i}[\ell(Z,\pi)].
\]
Let
\[
X_i(\pi)
=
w_i\left(\mu_i(\pi)-\ell(Z_i,\pi)\right).
\]
Since $Z_i$ is independent of $Z_1,\ldots,Z_{i-1}$,
\[
\mathbb E[X_i(\pi)\mid\mathcal F_{i-1}]
=
w_i
\left(
\mu_i(\pi)
-
\mathbb E[\ell(Z_i,\pi)]
\right)
=
0.
\]
Thus $X_1(\pi),\ldots,X_n(\pi)$ are martingale differences.

Since $\ell(Z_i,\pi)\in[0,1]$,
\[
\mu_i(\pi)-\ell(Z_i,\pi)
\in
[\mu_i(\pi)-1,\mu_i(\pi)].
\]
Multiplying by $w_i\ge0$ gives
\[
X_i(\pi)
\in
[
w_i(\mu_i(\pi)-1),
w_i\mu_i(\pi)
].
\]
This interval may depend on $\pi$, but its length is always $w_i$. Therefore
the range-length parameter in
Theorem~\ref{thm:seldin_pac_bayes_azuma} is $r_i=w_i$, and
\[
\sum_{i=1}^n r_i^2
=
\sum_{i=1}^n w_i^2
=
\frac{1}{n_{\mathrm{eff}}(w)}.
\]
Finally,
\[
\sum_{i=1}^n X_i(\pi)
=
L_w(\pi)-\widehat L_w(\pi).
\]
Applying Theorem~\ref{thm:seldin_pac_bayes_azuma} and using the upper-tail
direction proves the result.
\end{proof}

\begin{theorem}[Exact Seldin-form weighted hitting-time bound]
\label{thm:exact_seldin_weighted_hitting_time}
Assume $0\le T_z^\pi\le H_{\max}$ for every task $z$ and policy $\pi$. Fix normalized BOOST weights $\tilde w_{\phi,1},\ldots,\tilde w_{\phi,n}$ over $\mathcal D_{\mathrm{mixed}}$, and let $\mathcal P_\phi$ be the weighted mixed task distribution defined by
\[
\mathbb E_{z\sim\mathcal P_\phi}[f(z)]
=
\sum_{i=1}^n
\tilde w_{\phi,i}\,
\mathbb E_{Z\sim\mathcal P_i}[f(Z)] .
\]
Let $P_0$ be the pretrained-policy prior chosen independently of the task-specific training trajectories, and let $Q_\theta$ be the posterior induced by the learned policy. Then, for any $1<c\le e$, with probability at least $1-\delta$,
\[
\begin{aligned}
\mathbb E_{\pi\sim Q_\theta}
\mathbb E_{z\sim\mathcal P_{\mathrm{task}}}
\left[
T_z^\pi
\right]
\le\;&
\mathbb E_{\pi\sim Q_\theta}
\left[
\sum_{i=1}^n
\tilde w_{\phi,i}\,T_{z_i}^{\pi}
\right] \\
&+
H_{\max}
\frac{1+c}{2\sqrt{2}}
\sqrt{
\frac{
\mathrm{KL}(Q_\theta\|P_0)
+
\log\frac{2}{\delta}
+
\varepsilon_c(Q_\theta)
}{
n_{\mathrm{eff}}(\tilde w_\phi)
}
}
+
\Delta_{\mathrm{task}}(\phi).
\end{aligned}
\]
\end{theorem}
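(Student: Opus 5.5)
The plan is to reduce Theorem~\ref{thm:exact_seldin_weighted_hitting_time} to Corollary~\ref{cor:weighted_pac_bayes_bounded} by rescaling the hitting time into a $[0,1]$-valued loss, and then to pay for the change of task distribution with $\Delta_{\mathrm{task}}(\phi)$.

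\emph{Rescaling step.} Define the loss $\ell(z,\pi)=T_z^\pi/H_{\max}$. By the assumption $0\le T_z^\pi\le H_{\max}$, this satisfies $\ell(z,\pi)\in[0,1]$. Take the tasks $z_1,\ldots,z_n$ underlying $\mathcal D_{\mathrm{mixed}}$ as independent draws $Z_i\sim\mathcal P_i$, as in Corollary~\ref{cor:weighted_pac_bayes_bounded}. For dependence between synthetic and real tasks, use the conditioning argument of Remark~\ref{rem:independence}.

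\emph{Applying the corollary.} The weights $w=\tilde w_\phi$ are fixed, and softmax normalization places them in $\Delta_n$. The prior $P_0$ does not depend on the sample. Corollary~\ref{cor:weighted_pac_bayes_bounded} therefore gives, with probability at least $1-\delta$, simultaneously for every posterior $Q$, and in particular for $Q_\theta$,
\[
\mathbb E_{\pi\sim Q_\theta}[L_w(\pi)]\le \mathbb E_{\pi\sim Q_\theta}[\widehat L_w(\pi)]+\frac{1+c}{2\sqrt2}\sqrt{\frac{\mathrm{KL}(Q_\theta\|P_0)+\log\frac2\delta+\varepsilon_c(Q_\theta)}{n_{\mathrm{eff}}(\tilde w_\phi)}}.
\]
Multiply through by $H_{\max}$. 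By the definition of $\mathcal P_\phi$, the left side becomes $\mathbb E_{\pi\sim Q_\theta}\mathbb E_{z\sim\mathcal P_\phi}[T_z^\pi]$. The empirical term becomes $\mathbb E_{\pi\sim Q_\theta}[\sum_i\tilde w_{\phi,i}T_{z_i}^\pi]$.

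\emph{Change of distribution.} For each fixed $\pi\in\Pi$, the definition of $\Delta_{\mathrm{task}}(\phi)$ as a supremum gives
\[
\mathbb E_{z\sim\mathcal P_{\mathrm{task}}}[T_z^\pi]\le\mathbb E_{z\sim\mathcal P_\phi}[T_z^\pi]+\Delta_{\mathrm{task}}(\phi).
\]
This holds pointwise in $\pi$ with a constant that does not depend on $\pi$, so integrating against $Q_\theta$ preserves it. Adding it to the previous display yields the stated bound on the same probability-$(1-\delta)$ event.

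\emph{Main obstacle.} The delicate point is measurability and order of integration rather than any inequality. Swapping $\mathbb E_{\pi\sim Q_\theta}$ with $\mathbb E_{z}$ requires Fubini, which is justified because $T$ is bounded. The event from Corollary~\ref{cor:weighted_pac_bayes_bounded} must hold uniformly over posteriors, since $Q_\theta$ is data-dependent; this is exactly what the PAC--Bayesian statement provides.

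Care is also needed on whether $\Delta_{\mathrm{task}}(\phi)$ is deterministic. If $\mathcal P_\phi$ depends on the realized real tasks, it is a $\mathcal G$-random measure, and the argument must be run conditionally on $\mathcal G$ as in Remark~\ref{rem:independence}. I would state the result conditionally in that case.
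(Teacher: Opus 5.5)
Your proposal is correct and follows the paper's proof step for step: you rescale to $\ell(z,\pi)=T_z^\pi/H_{\max}\in[0,1]$, apply Corollary~\ref{cor:weighted_pac_bayes_bounded} with $w=\tilde w_\phi$ and $Q=Q_\theta$, and multiply by $H_{\max}$ to bound $\mathbb E_{\pi\sim Q_\theta}\mathbb E_{z\sim\mathcal P_\phi}[T_z^\pi]$; you then add the pointwise-in-$\pi$ inequality from the definition of $\Delta_{\mathrm{task}}(\phi)$, integrated against $Q_\theta$. (The Fubini concern is unnecessary: the corollary and the theorem both already place $\mathbb E_{\pi\sim Q_\theta}$ outermost, so no interchange of integrals is needed.)
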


\begin{proof}
Set $\ell(z,\pi) = T_z^\pi / H_{\max}$; by the bounded-horizon assumption, $\ell(z,\pi)\in[0,1]$. Applying Corollary~\ref{cor:weighted_pac_bayes_bounded} with $w_i=\tilde w_{\phi,i}$, $Q=Q_\theta$, and multiplying both sides by $H_{\max}$ yields
\[
\mathbb E_{\pi\sim Q_\theta}
\mathbb E_{z\sim\mathcal P_\phi}[T_z^\pi]
\le
\mathbb E_{\pi\sim Q_\theta}
\left[
\sum_{i=1}^n \tilde w_{\phi,i}\,T_{z_i}^\pi
\right]
+
H_{\max}
\frac{1+c}{2\sqrt{2}}
\sqrt{
\frac{
\mathrm{KL}(Q_\theta\|P_0)+\log\tfrac{2}{\delta}+\varepsilon_c(Q_\theta)
}{
n_{\mathrm{eff}}(\tilde w_\phi)
}
} .
\]
By definition of $\Delta_{\mathrm{task}}(\phi)$, for every $\pi\in\Pi$, $\mathbb E_{z\sim\mathcal P_{\mathrm{task}}}[T_z^\pi] \le \mathbb E_{z\sim\mathcal P_\phi}[T_z^\pi] + \Delta_{\mathrm{task}}(\phi)$. Taking expectation over $\pi\sim Q_\theta$ and combining the two inequalities proves the claim.
\end{proof}

\begin{corollary}[Simplified constant-form bound]
\label{cor:simplified_weighted_hitting_bound}
Under the same assumptions as
Theorem~\ref{thm:exact_seldin_weighted_hitting_time}, there exists a universal
numerical constant $C_1>0$ such that, for any $\delta\in(0,1)$, with
probability at least $1-\delta$,
\[
\begin{aligned}
\mathbb E_{\pi\sim Q_\theta}
\mathbb E_{z\sim\mathcal P_{\mathrm{task}}}
\left[
T_z^\pi
\right]
\le\;&
\mathbb E_{\pi\sim Q_\theta}
\left[
\sum_{i=1}^n
\tilde w_{\phi,i}T_{z_i}^{\pi}
\right] \\
&+
C_1H_{\max}
\sqrt{
\frac{
\mathrm{KL}(Q_\theta\|P_0)+\log(2/\delta)
}{
n_{\mathrm{eff}}(\tilde w_\phi)
}
}
+
\Delta_{\mathrm{task}}(\phi).
\end{aligned}
\]
\end{corollary}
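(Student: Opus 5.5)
The plan is to obtain the corollary directly from Theorem~\ref{thm:exact_seldin_weighted_hitting_time} by fixing the free parameter $c$ and absorbing the logarithmic correction $\varepsilon_c(Q_\theta)$ into a constant multiple of $\mathrm{KL}(Q_\theta\|P_0)+\log(2/\delta)$. The first term and $\Delta_{\mathrm{task}}(\phi)$ are identical in both statements. So it suffices to show that the complexity term of the exact bound is at most $C_1H_{\max}\sqrt{(\mathrm{KL}(Q_\theta\|P_0)+\log(2/\delta))/n_{\mathrm{eff}}(\tilde w_\phi)}$ on the same probability-$(1-\delta)$ event.

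First I take $c=e$, which is allowed since $1<e\le e$. Then $\log c=1$ and
\[
\varepsilon_e(Q_\theta)=\frac{\log 2}{2}\Bigl[1+\log\Bigl(1+\frac{\mathrm{KL}(Q_\theta\|P_0)}{\log(2/\delta)}\Bigr)\Bigr].
\]
Next I use $\log(1+x)\le x$ for $x\ge 0$, which applies because $\mathrm{KL}\ge 0$. This gives
\[
\varepsilon_e(Q_\theta)\le \frac{\log 2}{2}+\frac{\log 2}{2}\cdot\frac{\mathrm{KL}(Q_\theta\|P_0)}{\log(2/\delta)}.
\]
Since $\delta\in(0,1)$, we have $\log(2/\delta)>\log 2$. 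Hence the second summand is at most $\tfrac12\mathrm{KL}(Q_\theta\|P_0)$, and the first summand is at most $\tfrac12\log(2/\delta)$. Therefore
\[
\mathrm{KL}(Q_\theta\|P_0)+\log\tfrac{2}{\delta}+\varepsilon_e(Q_\theta)\le \tfrac32\Bigl(\mathrm{KL}(Q_\theta\|P_0)+\log\tfrac{2}{\delta}\Bigr).
\]

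Substituting this into Theorem~\ref{thm:exact_seldin_weighted_hitting_time} with $c=e$ and using monotonicity of the square root gives the claim with
\[
C_1=\frac{1+e}{2\sqrt2}\sqrt{\tfrac32}.
\]
This constant does not depend on $\delta$, $n$, the weights or the posterior, so it is a universal numerical constant as required.

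The only delicate point is making the absorption of $\varepsilon_c$ uniform in $\delta$. This is exactly where the restriction $\delta<1$ is needed, since it gives the lower bound $\log(2/\delta)>\log 2$. Without it, the term $\mathrm{KL}/\log(2/\delta)$ could not be controlled by a constant multiple of $\mathrm{KL}$.
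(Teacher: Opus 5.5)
Your proof is correct and follows the paper's argument: fix $c$ in Theorem~\ref{thm:exact_seldin_weighted_hitting_time}, bound $\varepsilon_c(Q_\theta)$ via $\log(1+x)\le x$, and absorb the result together with the prefactor $(1+c)/(2\sqrt{2})$ into $C_1$. You also make explicit what the paper leaves implicit, namely the choice $c=e$, the constant $C_1=\frac{1+e}{2\sqrt{2}}\sqrt{3/2}$, and the fact that $\delta<1$ gives $\log(2/\delta)>\log 2$, which is needed for the absorption to be uniform in $\delta$.
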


\begin{proof}
Fix any $1<c\le e$ in
Theorem~\ref{thm:exact_seldin_weighted_hitting_time}. Since
$\log(1+x)\le x$ for $x\ge 0$, the term $\varepsilon_c(Q_\theta)$ is bounded by
a constant depending only on $c$ times
$\mathrm{KL}(Q_\theta\|P_0)+\log(2/\delta)$. Absorbing this constant and the
Hoeffding--Azuma prefactor $(1+c)/(2\sqrt{2})$ into $C_1$ proves the claim.
\end{proof}

\section{Related Works}
\paragraph{Synthetic Data for LLMs} The rapid advancement of Large Language Models (LLMs) has heavily relied on the availability of large, diverse datasets. However, as high-quality human-generated data from the internet may get exhausted in the near future \citep{villalobos2022will,liu2024best}, synthetic data has become an increasingly popular choice. Specifically, synthetic data offers a scalable solution to real-world data scarcity by enabling the generation of massive amounts of domain-specific data tailored to specialised tasks \cite{zelikman2022star, singh2023beyond}. However, several works have noted that the quality of synthetic data remains a major concern in producing useful fine-tuned models \cite{goldie2025synthetic, singh2023beyond}. Models trained on false, biased, or ungrounded synthetic data may fail to generalise \citep{wood2021fake} or drastically degrade performance \citep{alemohammad2023self,gerstgrasser2024model,shumailov2023curse}. Therefore, data quality must be rigorously accounted for when learning from synthetic datasets to ensure robust and aligned performance.

\paragraph{Bilevel Optimization for Fine-Tuning LLMs} To address the problem of data quality in fine-tuning LLMs, previous works have utilized Bilevel Optimization (BO) to learn scalar data weights of different data sources \cite{xie2023doremi,pan2025scalebio}. Traditionally, bilevel optimization involves an inner-outer hierarchical structure that requires computationally expensive second-order information (Hessians), making it challenging to scale. Recent works \citep{kwon2023fully,chen2025near} have proposed fully first-order bilevel optimization algorithms that bypass the need for Hessians.  ScaleBIO framework further applies these first order methods to successfully scaling data reweighting to large language models \citep{pan2025scalebio}. However, these works still primarily focus on learning simple scalar weights over different data sources, which do not fully scale to learning comprehensive, per-data-point weights. Furthermore, no prior work has applied bilevel optimization in the context of multi-step Reinforcement Learning; existing literature has exclusively considered its application within simple Supervised Fine-Tuning (SFT) settings and straightforward instruction-following or reasoning tasks.


\section{Hyperparameters}
In Table \ref{tab:hyperparams}, we provide the hyperparameter details for all our experiments.

\begin{table}[ht]
\centering
\caption{Hyperparameters for all experiments across three environments and two algorithms.}
\label{tab:hyperparams}
\renewcommand{\arraystretch}{1.35}
\setlength{\tabcolsep}{5pt}
\begin{tabular}{
  l
  l
  l
  c
  c
  c
  c
  c
  c
  c
  c
  c
}
\toprule
\multirow{2}{*}{\textbf{Environment}}
  & \multirow{2}{*}{\textbf{Algo.}}
  & \multirow{2}{*}{\textbf{Model}}
  & \rotatebox{90}{\textbf{LR}\hspace{4pt}}
  & \rotatebox{90}{$\boldsymbol{\alpha}$ \textbf{(init)}\hspace{4pt}}
  & \rotatebox{90}{\textbf{CQL weight}\hspace{4pt}}
  & \rotatebox{90}{\textbf{Batch size}\hspace{4pt}}
  & \rotatebox{90}{\textbf{Grad.\ accum.}\hspace{4pt}}
  & \rotatebox{90}{$K_\theta = K_\psi$\hspace{4pt}}
  & \rotatebox{90}{$K_\phi$\hspace{4pt}}
  & \rotatebox{90}{\textbf{Train frac}\hspace{4pt}}
  & \rotatebox{90}{\textbf{Val split}\hspace{4pt}}
  \\
\cmidrule(lr){4-12}
 
\rowcolor{tqbg}
Twenty Questions
  & MC
  & \texttt{gpt2}
  & 1e-4  & 0     & 10 & 8 & 16 & 20 & 1 & 0.6 & 0.3 \\
 
\rowcolor{tqbg}
Twenty Questions
  & ILQL
  & \texttt{gpt2}
  & 1e-4 & 1     & 10 & 8 & 16 & 20 & 1 & 0.6 & 0.3 \\
 
\rowcolor{carbg}
Car Dealer
  & ILQL
  & \texttt{gpt2}
  & 1e-4 & 1     & 10 & 8 & 2 & 32 & 1 & 0.6 & 0.3 \\
 
\rowcolor{carbg}
Car Dealer
  & MC
  & \texttt{gpt2}
  & 1e-4 & 10    & 10 & 8 & 2 & 32 & 1 & 0.6 & 0.3 \\
 
\rowcolor{webbg}
WebShop
  & MC
  & \texttt{LLaMA-3-8B}
  & 1e-4 & 0.1   & 10 & 4 & 4 & 32 & 1 & 0.6 & 0.3 \\
 
\rowcolor{webbg}
WebShop
  & ILQL
  & \texttt{LLaMA-3-8B}
  & 1e-4 & 0.1   & 10 & 1 & 16 & 32 & 1 & 0.6 & 0.3 \\
 
\bottomrule
\end{tabular}
\end{table}


\section{Task Categories and Category-Based Data Splits}
\label{app:categories}

A central design choice in our evaluation is to split data \emph{by task category} rather
than by individual task instance. This tests genuine generalisation: a model must transfer
its learned policy to semantic categories it has never optimised on, rather than simply
interpolating within a known distribution. Below we describe the categories for each
environment and the precise procedure used to construct the splits.

\subsection{Twenty Questions: Word Categories}
\label{app:categories:tq}

In the Twenty Questions environment each episode contains a hidden target word that the
agent must identify through yes/no questions. Every target word belongs to one of
17~semantic categories. The full category taxonomy with representative members is given in
Table~\ref{tab:tq_categories}.

\begin{table}[ht]
\centering
\caption{Twenty Questions word categories and sample objects.
The full item pool contains 10--14 objects per category.}
\label{tab:tq_categories}
\renewcommand{\arraystretch}{1.25}
\setlength{\tabcolsep}{6pt}
\begin{tabular}{l l}
\toprule
\textbf{Category} & \textbf{Sample objects} \\
\midrule
Sports              & Basketball, Football, Baseball, Tennis racket, Helmet \\
Animals             & Cat, Dog, Horse, Lion, Elephant \\
Fruits              & Apple, Banana, Strawberry, Watermelon, Mango \\
Vehicles            & Car, Motorcycle, Airplane, Helicopter, Ship \\
Clothes             & Shirt, Jacket, Dress, Boots, Scarf \\
Electronics         & Smartphone, Television, Camera, Refrigerator, Blender \\
Musical Instruments & Piano, Guitar, Violin, Trumpet, Harp \\
Furniture           & Chair, Bed, Couch, Bookcase, Nightstand \\
Office Supplies     & Pen, Stapler, Calculator, Scissors, Diary \\
Vegetables          & Carrot, Broccoli, Tomato, Spinach, Cucumber \\
Art                 & Painting, Paintbrush, Canvas, Sculpture, Marker \\
Kitchen Tools       & Knife, Fork, Bowl, Frying pan, Whisk \\
Nature              & Rock, Tree, Mountain, Ocean, Cactus \\
Toys                & Lego, Doll, Kite, Puzzle, Stuffed animal \\
Jewelry             & Earrings, Necklace, Ring, Watch, Pendant \\
Garden Supplies     & Shovel, Rake, Watering can, Lawn mower, Gloves \\
Tools               & Hammer, Screwdriver, Wrench, Saw, Drill \\
\bottomrule
\end{tabular}
\end{table}

\subsection{Car Dealer Negotiation: Brand Categories}
\label{app:categories:car}

In the Car Dealer Negotiation environment the buyer's preferred car brand encodes latent
information about price range, feature expectations, and negotiation style. We treat each
brand as a distinct task category. The 13~brands used in our experiments are:

\begin{center}
\textit{Volkswagen, Lexus, Ford, Mazda, Hyundai, Toyota,
Mercedes-Benz, BMW, Audi, Subaru, Honda, Porsche, Tesla.}
\end{center}

\subsection{WebShop Agent: Product Categories}
\label{app:categories:webshop}

Each WebShop episode specifies a purchase goal tied to a product drawn from Amazon's
category taxonomy. We use the full two-level category string as the task category label.
Our dataset spans 37~such categories, listed in Table~\ref{tab:webshop_categories}.

\begin{table}[ht]
\centering
\caption{WebShop product categories used as task categories (37 total).}
\label{tab:webshop_categories}
\renewcommand{\arraystretch}{1.2}
\setlength{\tabcolsep}{5pt}
\begin{tabular}{l l}
\toprule
\multicolumn{2}{l}{\textbf{Category (domain -- sub-category)}} \\
\midrule
Automotive -- Motorcycle \& Powersports
  & Baby Products -- Nursery \\
Beauty \& Personal Care -- Foot, Hand \& Nail Care
  & Beauty \& Personal Care -- Fragrance \\
Beauty \& Personal Care -- Hair Care
  & Beauty \& Personal Care -- Makeup \\
Beauty \& Personal Care -- Oral Care
  & Beauty \& Personal Care -- Personal Care \\
Beauty \& Personal Care -- Shave \& Hair Removal
  & Beauty \& Personal Care -- Skin Care \\
Beauty \& Personal Care -- Tools \& Accessories
  & Cell Phones \& Accessories -- Accessories \\
Cell Phones \& Accessories -- Cases, Holsters \& Sleeves
  & Clothing, Shoes \& Jewelry -- Men \\
Clothing, Shoes \& Jewelry -- Novelty \& More
  & Clothing, Shoes \& Jewelry -- Sport Specific Clothing \\
Clothing, Shoes \& Jewelry -- Women
  & Electronics -- Accessories \& Supplies \\
Electronics -- Camera \& Photo
  & Electronics -- Car \& Vehicle Electronics \\
Electronics -- Computers \& Accessories
  & Electronics -- Home Audio \\
Electronics -- Portable Audio \& Video
  & Grocery \& Gourmet Food -- Deli \& Prepared Foods \\
Grocery \& Gourmet Food -- Frozen
  & Grocery \& Gourmet Food -- Produce \\
Grocery \& Gourmet Food -- Snacks \& Sweets
  & Home \& Kitchen -- Bedding \\
Home \& Kitchen -- Furniture
  & Home \& Kitchen -- Home D\'{e}cor Products \\
Home \& Kitchen -- Kitchen \& Dining
  & Home \& Kitchen -- Wall Art \\
Office Products -- Office \& School Supplies
  & Office Products -- Office Furniture \& Lighting \\
Sports \& Outdoors -- Sports
  & Tools \& Home Improvement -- Lighting \& Ceiling Fans \\
\bottomrule
\end{tabular}
\end{table}

\subsection{Shared Split Protocol}
\label{app:categories:protocol}

Across all three environments the split procedure follows these steps:

\begin{enumerate}

  \item \textbf{Category enumeration.}
        All distinct category labels present in the full dataset are collected and fixed
        before any model training.

  \item \textbf{Train/eval category partition.}
        A fraction $f_{\mathrm{train}}$ of categories (controlled by
        \texttt{--train\_task\_frac}) is randomly assigned to the \emph{training} split.
        All remaining categories form the \emph{eval category pool}, which is shared
        between the validation and evaluation sets.
        All offline trajectories whose category falls in the training split are placed
        in $\mathcal{D}_{\mathrm{train}}$.

  \item \textbf{Val/eval trajectory partition within held-out categories.}
        For every category in the eval category pool, its trajectories are randomly
        divided into two groups.
        A fraction $f_{\mathrm{val}}$ of trajectories (controlled by
        \texttt{--val\_split}) are assigned to the \emph{validation} set
        $\mathcal{D}_{\mathrm{val}}$; the remaining $1 - f_{\mathrm{val}}$ fraction form
        the \emph{evaluation} set $\mathcal{D}_{\mathrm{eval}}$.
        Crucially, validation and evaluation trajectories are drawn from the
        \emph{same held-out categories}.

  \item \textbf{Synthetic data generation.}
        Synthetic trajectories are seeded exclusively from prefixes of
        $\mathcal{D}_{\mathrm{train}}$ trajectories.
        No synthetic data is generated from held-out categories, ensuring that
        $\mathcal{D}_{\mathrm{synth}}$ shares the category distribution of
        $\mathcal{D}_{\mathrm{train}}$ and leaks no information about held-out
        categories into training.

  \item \textbf{Evaluation.}
        At test time, the policy interacts with a live simulator on tasks drawn from the
        held-out categories exclusively, using the $\mathcal{D}_{\mathrm{eval}}$ partition.
        This constitutes a strictly out-of-distribution evaluation at the
        \emph{category level}.
    \item \textbf{Low data sample.}
        To simulate a low-data setting, we subsample 2.5\%, 10\%, and 30\% of the full
training data for Twenty Questions, Car Dealer, and WebShop respectively.

\end{enumerate}

The exact split fractions used per environment are summarised in
Table~\ref{tab:hyperparams}. The partition is fixed with a single random seed before
any model training and held constant across all methods for a given environment, ensuring
fair comparison.


\section{Synthetic Data Generation}
\label{app:synth}

For each training trajectory $\tau \in \mathcal{D}_{\mathrm{train}}$ we generate exactly
one synthetic counterpart, so $|\mathcal{D}_{\mathrm{synth}}| = |\mathcal{D}_{\mathrm{train}}|$.
To ensure a fair comparison with non-synthetic baselines, we always use the \emph{same}
LLM that will subsequently be fine-tuned as the data generator: GPT-2 for Twenty Questions
and Car Dealer Negotiation, and Llama-3-8B for WebShop.
This avoids any confound from generator capacity: synthetic baselines cannot benefit from
access to a stronger model, so performance differences are attributable solely to the
reweighting mechanism.

\subsection{Generation Procedure}

Each synthetic trajectory is constructed as follows.

\begin{enumerate}
  \item \textbf{Seeding from real data.}
        We take the first two timesteps of a real training trajectory, one environment
        observation and one agent action, as the prefix of the synthetic trajectory.
        This grounds the synthetic episode in the same initial conditions as the real data
        and ensures that synthetic tasks are drawn from the same category distribution as
        $\mathcal{D}_{\mathrm{train}}$.

  \item \textbf{Simulator-free self-play.}
        From the seed prefix onward, the LLM is prompted to play \emph{both} roles
        simultaneously: it generates the next agent action conditioned on the current
        history, then generates the resulting environment observation conditioned on that
        action, and so on.
        Crucially, this requires no access to the real environment simulator.
        At each step, the model receives (i) a system prompt describing the environment
        rules and action space, (ii) an example of a valid trajectory, and
        (iii) the full dialogue history so far.

  \item \textbf{Termination.}
        The episode ends when the model generates a terminal action (e.g.\
        \texttt{click[Buy Now]} in WebShop, a final guess in Twenty Questions, or an
        agreement or rejection in Car Dealer) or when the maximum episode length is
        reached.
\end{enumerate}

This procedure yields two important properties.
First, because the same model acts as both agent and environment, the synthetic
observations are consistent with the model's own internal world model, keeping
trajectories internally coherent without a simulator.
Second, because synthetic data is seeded from real prefixes, the early timesteps of each
synthetic trajectory remain grounded; distributional drift accumulates only in later turns,
where the bilevel reweighting head can learn to discount low-quality continuations.

\subsection{Environment Prompts}
\label{app:synth:prompts}

Below, in Figures \ref{fig:prompt_tq}, \ref{fig:prompt_car}, \ref{fig:prompt_webshop} we report the prompts used to condition the LLM in each role during synthetic data
generation for Twenty Questions, Car Dealer and Webshop Environments.

\label{app:synth:tq}

\begin{figure}[ht]
\begin{mdframed}
\small


\textbf{Agent main prompt}

\medskip
\texttt{%
Welcome to the game of Twenty Questions! Your objective is to guess what the
object is within twenty questions. At every turn, you will have the opportunity
to ask a yes/no question and receive an answer from the oracle. You can ask
twenty questions but must ask as few questions as possible.
}

\bigskip
\textbf{Environment system prompt (oracle)}

\medskip
\texttt{%
You are the oracle in a game of Twenty Questions. A hidden target object has
been chosen (provided in the goal). For every yes/no question the agent asks,
respond with exactly ``Yes.'' or ``No.'' based solely on whether the answer is
true of the target object. Do not reveal the object or provide any other
information. Output only your one-word answer.
}
\end{mdframed}
\caption{Twenty Questions prompts used during synthetic data generation.}
\label{fig:prompt_tq}
\end{figure}

\label{app:synth:car}

\begin{figure}[ht]
\begin{mdframed}
\small
\textbf{Agent system prompt (salesperson)}

\medskip
\texttt{%
You are an expert car salesperson conducting a price negotiation with a buyer.
Your goal is to reach a sale at the highest possible price while keeping the
buyer engaged. You may offer discounts and highlight vehicle features, but you
must not accept a price below your minimum threshold. Output only a single
dialogue turn.
}

\bigskip
\textbf{Environment system prompt (buyer)}

\medskip
\texttt{%
You are simulating a car buyer in a negotiation. Given the negotiation history,
generate the buyer's next response. The buyer has a target price and preferred
brand (provided in the goal). The buyer will counteroffer when the current
price exceeds their target, accept if it meets or falls below their target,
and walk away after a fixed number of failed rounds. Output only the buyer's
next utterance, nothing else.
}


\end{mdframed}
\caption{Car Dealer Negotiation prompts used during synthetic data generation.
Bracketed placeholders are filled from the seed trajectory's metadata.}
\label{fig:prompt_car}
\end{figure}

\label{app:synth:webshop}

\begin{figure}[ht]
\begin{mdframed}
\small
\textbf{Agent system prompt}

\medskip
\texttt{%
You are playing WebShop. You will see a text view of the current page and must
output exactly one action per turn.\\[4pt]
CRITICAL RULES:\\
-- search[query]: type a search query of 4--8 words maximum, extracting only
   core keywords (e.g., ``mens tuxedo shirt'').\\
-- You only get ONE search. Once you see search results, choose the best-matching
   product and use click[].\\
-- On a product page, click the relevant attribute options and then click[Buy Now].\\
-- If attributes or price do not match and a better item exists, click[\textless{} Prev]
   to return to results and try the next item.\\[4pt]
Output only one action per message in the form action[argument]. Nothing else.
}

\bigskip
\textbf{Environment simulator system prompt}

\medskip
\texttt{%
You are simulating the WebShop text-based shopping environment. Given the
trajectory so far, generate the NEXT environment observation.\\[4pt]
RULES:\\
1. After search[query]: return 3 product listings in the format\\
   \hspace*{1em}[ASIN] \textbackslash{}n Product Name \textbackslash{}n \$Price\\
   Include the target product (from goal info) as one of the 3 results.\\
2. After click[ASIN] on results: return the product detail page with option
   dropdowns, product name, price, rating, and [Buy Now]. Use real option values
   for the target product.\\
3. After click[option] on a product page: return
   ``You have clicked \{option\}.''\\
4. After click[\textless{} Prev]: return the most recent search results page.\\[4pt]
Output ONLY the observation text. Match exact WebShop formatting.
}
\end{mdframed}
\caption{WebShop prompts used during synthetic data generation.}
\label{fig:prompt_webshop}
\end{figure}

In all three environments, a representative example trajectory (formatted identically to
the real data) is appended to the respective system prompts before generation begins, to
disambiguate the expected output format and action grammar.

\section{Training Implementation Details}
\label{app:impl}
 
\subsection{Base Language Models}
 
For Twenty Questions and Car Dealer Negotiation we fine-tune \textbf{GPT-2}
(117M parameters), which has been shown to perform adequately on these tasks in
prior work~\cite{abdulhai2025lmrl}.
For WebShop, which requires multi-step information gathering and diverse action types
across a large product space, we fine-tune \textbf{Llama-3-8B} (8B parameters).
To make fine-tuning tractable at this scale, we apply \textbf{QLoRA}~\cite{dettmers2023qlora}:
the base model weights are quantised to 4-bit precision and kept frozen, while
low-rank adapter matrices are trained in 16-bit precision.
This reduces the memory footprint of the base model significantly.
 
\subsection{Q and V Head Architecture}
 
BOOST augments the base language model with lightweight value heads that are trained
on top of base LLMs final hidden layer outputs.
The head architecture differs by RL algorithm:
 
\begin{itemize}
  \item \textbf{Monte Carlo (MC).} A single Q-head maps each token's hidden state to a
        scalar return estimate.
        The head is a linear projection:
        \[
          Q_\theta(s, a) \;=\; W_Q \,h \;+\; b_Q,
          \quad W_Q \in \mathbb{R}^{|\mathcal{V}| \times d},
        \]
        where $h \in \mathbb{R}^d$ is the final hidden state of the LM (with $d$ the
        model embedding dimension) and $|\mathcal{V}|$ is the vocabulary size.
 
  \item \textbf{ILQL.} Two Q-heads (for the double-Q trick) and one V-head are used.
        The Q-heads share the same architecture as in MC above; the V-head is a scalar
        projection:
        \[
          V_\theta(s) \;=\; W_V \,h \;+\; b_V,
          \quad W_V \in \mathbb{R}^{1 \times d}.
        \]
\end{itemize}
 
Both head types are thus \emph{depth-1 linear models} with no non-linearity, keeping
the number of additional parameters negligible relative to the base LM.

\subsection{Memory-Efficient Auxiliary Model}
 
BOOST maintains an auxiliary policy $\psi$ that tracks the inner-level optimum (see
Algorithm~1). Instead of storing a full second copy of all model
parameters, doubling memory consumption our
auxiliary policy $\psi$ and the primary policy $\theta$ differ only in their value heads:
the base LM backbone is shared between the two.
Concretely, we keep a \emph{single copy} of the base LM weights in GPU memory and
maintain \emph{separate copies of only the Q-heads} (for MC) or \emph{the Q-heads and
V-head} (for ILQL) for $\theta$ and $\psi$ respectively.
Because the value heads are depth-1 linear layers, the additional memory cost of the
auxiliary model is a negligible fraction of the total.
 
\subsection{Reweighting Head $w_\phi$}
 
The reweighting head $w_\phi$ takes as input an embedding representation of a trajectory
and outputs a scalar weight.
We use a fully connected MLP with ReLU activations:
 
\begin{itemize}
  \item \textbf{Twenty Questions and Car Dealer Negotiation:} depth-2 MLP
        (one hidden layer).
  \item \textbf{WebShop:} depth-4 MLP (three hidden layers), reflecting the greater
        complexity and diversity of the WebShop action space.
\end{itemize}
 
In all cases the MLP is small relative to the base model, adding negligible compute and
memory overhead to the bilevel updates.
 
\subsection{Trajectory Embeddings}
\label{app:impl:embeddings}
 
To obtain a fixed-dimensional representation of each trajectory for input to $w_\phi$,
we pass the trajectory through the \emph{pretrained} (unmodified) base language model and
extract the final-layer hidden state of the last token as the trajectory
embedding. This embedding is computed once per trajectory before training begins and
cached, so it does not contribute to the per-iteration training cost.
Using the pretrained model's representations rather than the continuously updating
fine-tuned model ensures that the embedding space remains stable throughout training,
preventing the input distribution to $w_\phi$ from shifting as $\theta$ evolves.

\section{Compute Infrastructure}
We run all our experiments on Nvidia A100 GPUs with 80 GB memory. Each experiment on Twenty Question environment runs within 14 hours while experiments on Car Dealer Environment and Webshop environment run within 10 hours.

\section{Qualitative Analysis of Synthetic Data}

Table~\ref{tab:traj_comparison} provides a side-by-side comparison of WebShop synthetic trajectories assigned high and low weights by $w_\phi$. The high-weight trajectory follows the task instruction faithfully, executes actions that are grounded in environment observations, and completes a successful purchase. The low-weight 
trajectory, by contrast, exhibits two characteristic failure patterns: hallucinated 
action targets, clicking on a product identifier that never appeared in the 
environment's output, and environment-rule violations in the form of a forbidden 
repeated search, both of which derail task completion before a reward can be obtained. 
That $w_\phi$ learns to distinguish these cases without explicit supervision on 
action validity is noteworthy: the reweighting signal comes purely from downstream 
validation reward, yet the learned weights recover a semantically meaningful notion 
of trajectory quality. This qualitative evidence is consistent with our quantitative 
findings in Figure \ref{fig:weight_all} that BOOST assigns higher weights to synthetic transitions 
that most closely resemble valid, executable offline behaviour.

\begin{table}[tb]
\centering
\caption{Side-by-Side Comparison of Good vs. Bad WebShop Trajectories}
\label{tab:traj_comparison}
\renewcommand{\arraystretch}{1.25}
\begin{tabularx}{\textwidth}{@{}p{1.4cm} X X@{}}
\toprule
\textbf{Phase / Key Timestep} 
& \textbf{Good Synthetic Sample (\%weight shift: 5.2\%)} 
& \textbf{Bad Synthetic Sample \% weight shift:-2.8\%} \\
\midrule

\textbf{T0: Init} 
& \textbf{State:} Environment reset; instruction specifies long-sleeve men’s T-shirt, color \emph{green}, size \emph{3x-large big}, price $<\$70$.  
  \newline \textbf{Action:} (implicit) \texttt{reset}  
  \newline \textbf{Reward:} 0.0  
& \textbf{State:} Environment reset; instruction with dress shirt, specific fabrics, color \emph{kelly green}, fit \emph{men}, size \emph{2T}, price $<\$50$.  
  \newline \textbf{Action:} (implicit) \texttt{reset}  
  \newline \textbf{Reward:} 0.0  
  \\
\midrule

\textbf{T1: Search} 
& \textbf{State:} Main webpage with instruction.  
  \newline \textbf{Action:} \texttt{search[machine wash men's t-shirts long sleeve green 3x-large]}  
  \newline \textbf{Reward:} 0.0  
  \newline \textbf{Comment:} Single query with key attributes (category, sleeve, color, size). 
& \textbf{State:} Main webpage with instruction.  
  \newline \textbf{Action:} \texttt{search[mens dress shirts kelly green 2t]}  
  \newline \textbf{Reward:} 0.0  
  \newline \textbf{Comment:} single query; aligned with the goal at this point. 
  \\
\midrule

\textbf{T2: First Product Choice} 
& \textbf{State:} Search results show multiple long-sleeve T-shirts, including \texttt{B00O30JLDK} with relevant attributes.  
  \newline \textbf{Action:} \texttt{click[B00O30JLDK]}  
  \newline \textbf{Reward:} 0.0  
  \newline \textbf{Comment:} Chooses a \emph{valid} ID from the current results (no hallucination). 
& \textbf{State:} Search results show IDs \texttt{B0969G2DH8}, \texttt{B07F9K8K3R}, \texttt{B07R9T8K3R}.  
  \newline \textbf{Action:} \textcolor{red}{\texttt{click[B09QQP3356]}}  
  \newline \textbf{Reward:} 0.0  
  \newline \textbf{Comment:} Hallucinates an ID not in results, causing jump to unrelated product. \\
\midrule

\textbf{T3: Refinement vs. Drift} 
& \textbf{State:} On product page \texttt{B00O30JLDK}; size options include \emph{3x-large big}, color options include \emph{green}.  
  \newline \textbf{Action:} \texttt{click[3x-large big]}  
  \newline \textbf{Reward:} 0.0  
  \newline \textbf{Comment:} Refines selection by choosing the exact requested size; still on a good candidate item. 
& \textbf{State:} On a polo shirt page (\texttt{B09QQP3356}) with wrong category (polo, short sleeve), wrong colors, no 2T.  
  \newline \textbf{Action:} \texttt{think[no valid action]} and then attempts a second search later.  
  \newline \textbf{Reward:} 0.0  
  \newline \textbf{Comment:} Recognizes difficulty but fails to recover (should navigate back or pick a valid ID from original results). \\
\midrule

\textbf{T4: Near-Term Goal Alignment} 
& \textbf{State:} Same product page, now with size set correctly; price \$49.99 $<\$70$, color \emph{green} available.  
  \newline \textbf{Action:} \texttt{click[Buy Now]}  
  \newline \textbf{Reward:} \textbf{1.0}  
  \newline \textbf{Comment:} All constraints (type, sleeve, color, size, price) satisfied at checkout. 
& \textbf{State:} After bouncing between unrelated items and invalid actions, no product matches color \emph{kelly green} and size \emph{2T}.  
  \newline \textbf{Action:} \texttt{click[Buy Now]} from a mismatched context.  
  \newline \textbf{Reward:} \textbf{0.0}  
  \newline \textbf{Comment:} Attempts to check out without satisfying core constraints; failure stems from early hallucination plus rule violations and lack of recovery. \\
\bottomrule
\end{tabularx}
\end{table}


\end{document}